\renewcommand{\vec}[1]{{\mathbf #1}}
\providecommand{\defeq}[0]{\stackrel{\text{def}}{=}}
\providecommand{\persp}[1]{#1_\oslash}
\newtheorem{theorem}{Theorem}
\newtheorem{lemma}{Lemma}
\theoremstyle{remark}
\newtheorem{remark}{Remark}
\date{August 15, 2013}
\begin{document}
\title{Compact Relaxations for MAP Inference in Pairwise MRFs with Piecewise Linear Priors}

\author{Christopher~Zach\thanks{Microsoft Research Cambridge} \and and~Christian~H\"{a}ne\thanks{ETH Z\"urich, Switzerland}}

%% \begin{keywords}
%% Markov random fields, linear programming relaxation, approximate inference
%% \end{keywords}}

% make the title area
\maketitle

\begin{abstract}
Label assignment problems with large state spaces are important tasks
especially in computer vision. Often the pairwise interaction (or smoothness
prior) between labels assigned at adjacent nodes (or pixels) can be described
as a function of the label difference. Exact inference in such labeling tasks
is still difficult, and therefore approximate inference methods based on a
linear programming (LP) relaxation are commonly used in practice. In this work
we study how compact linear programs can be constructed for general piecwise
linear smoothness priors. The number of unknowns is $O(LK$) per pairwise
clique in terms of the state space size $L$ and the number of linear segments
$K$. This compares to an $O(L^2)$ size complexity of the standard LP
relaxation if the piecewise linear structure is ignored. Our compact
construction and the standard LP relaxation are equivalent and lead to the
same (approximate) label assignment.
\end{abstract}

\section{Introduction}

Determining a maximum a-posteriori (MAP) solution in graphical models over
discrete states, or equivalently finding a minimizer of a corresponding
energy, is one of the fundamental tools in machine learning and computer
vision. In this work we focus on problems with at most pairwise cliques (and
associated pairwise potentials) in the graphical model. In the following we
use the terms ``pairwise potential'' and ``(smoothness) prior''
synonymously. Since exact inference in graphical models (even with at most
pairwise potentials) is generally not tractable, research has been focused on
tractable and high-quality \emph{approximate} inference algorithms, of which
the linear programming (LP) relaxation for discrete inference tasks
(e.g.~\cite{chekuri2004linear,werner2007maxsum_review,wainwright2008graphical,sontag2011introduction})
has received much attention. In some cases the LP relaxation solves the
inference problem
exactly~\cite{ishikawa2003exact,kolmogorob2005optimality,schlesinger2007permuted},
i.e.\ the relaxation is tight for certain problem classes. Due to the specific
structure of the resulting linear program generic methods to solve linear
programs are inefficient and therefore many specialized algorithms to find a
minimizer have been proposed in the literature. Since the primal linear
program for MAP estimation in pairwise problems has a quadratic number of
unknowns in terms of the state space size $L$ (per edge in the underlying
graph), the dual program with only a linear number of unknowns (but with a
quadratic number of e.g.\ constraints) is more appealing. Belief propagation
inspired message passing methods optimize this dual program in a
block-coordinate
schedule~\cite{kovalesky75msd,kolmogorov2006convergent,globerson2007fixing,hazan2010normproduct}.
As block-coordinate methods applied on a non-smooth and not strictly concave
problem these approaches iteratively increase the dual objective but are not
guaranteed to find a global maximizer of the (concave dual) problem. This is
well known in the literature, and we validate this occasional ``early
stopping'' behavior in the experimental section. Supergradient methods with an
appropriate stepsize rule are guaranteed to converge to a maximizer, but have
a slow $O(1/\sqrt{T})$ convergence rate (where $T$ is the iteration count). To
our knowledge all variations of faster $O(1/T)$ proximal methods explicitly or
implicitly maintain $O(L^2)$ primal unknowns, and are therefore prohibitively
expensive (in terms of memory requirements) for large state spaces. Variable
smoothing methods for non-smooth problems (e.g.~\cite{bot2012variable}) have
an $O(\ln(T)/T)$ convergence rate and require only $O(L)$ unknowns if applied
on the dual problem, but in practice show slow convergence in our experience
(as verified in our experimental section).

\begin{figure}[h]
  \centering
  \subfigure[]{ \includegraphics[width=0.23\linewidth]{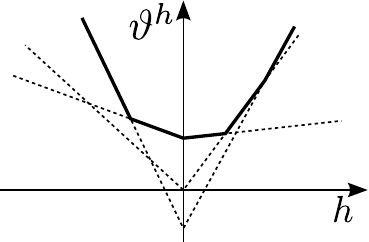}}
  \subfigure[]{\includegraphics[width=0.23\linewidth]{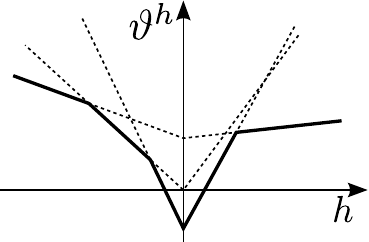}}
  \subfigure[]{ \includegraphics[width=0.23\linewidth]{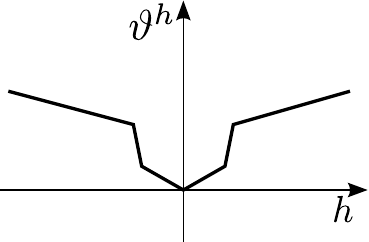}}
  \subfigure[]{\includegraphics[width=0.23\linewidth]{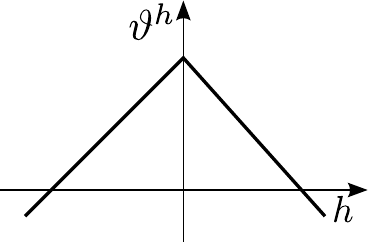}}
  %\caption{Potentials formed as maximum (left) and minimum (right) of several asymmetric linear potentials dependent on the label difference $h:=j-i$.}
  \caption{Pairwise potentials (in terms of the label difference $h$) suitable
    for compact relaxations. }
  \label{fig:maxmin}
\end{figure}

A natural question is whether the size of the primal program can be reduced if
the pairwise potentials are not arbitrary but have some useful ``structure.''
For particular pairwise potentials such as the $L^1$-norm of label
differences~\cite{ishikawa1998segmentation,boykov98markov} or truncated $L^1$
priors~\cite{zach2013connecting} the answer is affirmative. We generalize
these particular results to pairwise potentials that are piecewise linear in
terms of the label difference. In many computer vision related inference
problems the state space is a set of numeric values and the employed
smoothness prior is naturally based on label value differences.  Some of the
potentials addressed in our work are illustrated in Fig.~\ref{fig:maxmin}. We
show that for such pairwise potentials (but with arbitrary unary ones) we can
reformulate the primal linear programs in order to reduce the number of primal
variables from $O(L^2)$ to $O(KL)$, where $K$ is the number of linear pieces
defining the pairwise potential. The number of dual unknowns (or primal
constraints, respectively) increases to $O(KL)$, meaning an overall $O(KL)$
problem size. Specifically, we consider pairwise potentials that can be
written as pointwise minimum of convex ones.  Since our construction is a
reformulation, there is a correspondence between minimizers of the original LP
and the ones from our reduced program. Thus, our construction does not
weaken the LP relaxation for inference.

This manuscript is organized as follows: after introducing relevant notations
and briefly reviewing approximate MAP inference in
Section~\ref{sec:background}, we state our main result in
Section~\ref{sec:main_result}. The corresponding proof is constructive and the
two main ingredients are presented in Sections~\ref{sec:conv} (convex pairwise
priors) and~\ref{sec:metr} (pointwise minimum of pairwise potentials),
respectively. Since the underlying techniques are useful in their own right,
we provide the material in separate (and relatively self-contained) sections.
In Section~\ref{sec:isotropic} we discuss extensions of the proposed
reformulations to enable more isotropic behavior of solutions, which can be
relevant in image processing applications. In Section~\ref{sec:experiment} we
experimentally verify that message passing methods can stop early, and we
demonstrate our approach in an image denoising experiment.

\section{Background}
\label{sec:background}

In this section we introduce some notation used throughout the manuscript, and
further provide a short review on approximate inference for labeling problems.

\subsection{Notations}

The domain of the considered label assignment task is a graph ${\cal G} =
({\cal V}, {\cal E})$ with node set $\cal V$ and edge set $\cal E$. In
computer vision and image processing applications the node set is typically a
regular pixel grid and $\cal E$ is induced by a e.g.\ 4-connected or
8-connected neighborhood structure. We will write $\sum_s$ and $\sum_{s \sim
  t}$ as shorthand notations for $\sum_{s \in \cal V}$ and $\sum_{(s,t) \in
  {\cal E}}$, respectively. Our convention is that $s$, $t$ denote nodes from
$\cal V$ and $i$, $j$ indicate states (or labels). We will also use the sets
$\mathrm{out}(s)$ for the successor of $s$ and $\mathrm{in}(s)$ for the
ancestor nodes of $s$.

The $d$-dimensional (unit or probability) simplex is defined as $\Delta^d
\defeq \{ x \ge 0: \sum_{i=0}^{d-1} x^i = 1 \}$. Elements $x \in \Delta^d$ can
be seen as discrete probability densities, and we denote the corresponding
cumulative distribution function $X$ with $X^i = \sum_{j=0}^{i-1} x^j$. We
extend $X$ to indices $i \in \mathbb{Z}$ with $X^i = 0$ for $i \le 0$ and $X^i
= 1$ for $i \ge L$. If $x \in \Delta^d$ is integral (e.g.\ $x^i = 1$ for some
$i$, and 0 otherwise), then $X$ can also be interpreted as superlevel function
with $X^j = [j > i]$. The main purpose of introducing $X^i$ is to have a
shorthand notation for $\sum_{j=0}^{i-1} x^j$, which will occur frequently in
this manuscript.

We use the notations $\imath_C(x)$ and $\imath\{ x \in C \}$ to write a
constraint $x \in C$ as an extended valued function, i.e.\ $\imath_C(x)$ is 0
iff $x \in C$ and $\infty$ otherwise. For a convex function $f$ we denote its
convex conjugate by $f^*$ and the l.s.c.\ extension of its perspective as
$\persp{f}$, which can be defined via the biconjugate:
\begin{align}
  %\persp{f}(x, \vec{y}) \defeq \max_{z, \vec{w}: z + f^*(\vec w) \ge 0} xz + \vec y^T \vec w.
  \persp{f}(z, w) \defeq \max_{\mu, {\mbox{\scriptsize $\nu$}}: \mu + f^*({\mbox{\scriptsize $\nu$}}) \ge 0} z\mu + w^T \nu.
\end{align}
Throughout the manuscript we assume that the recession function of $f$ is
$\imath_{\{0\}}$ (i.e.\ $\lim_{z \to 0+}\persp{f}(z, w) =
\imath_{\{0\}}(w)$). This can be achieved by adding redundant bounds
constraints to $f$, since all unknowns in our convex problems are usually
restricted to $[0,1]$. In section~\ref{sec:metr} we will make use of the
following fact (see e.g.~\cite{zach2012dcmrf}):
\begin{lemma}
  \label{lem:persp}
  Let $\{f^i\}_{i = 1, \dotsc, n}$ be a family of convex functions, then
  \begin{align}
    %\min_i \min_{\xi \in \mathbb{R}^d} f^i(\xi) = \min_{\vec x \in \Delta^n} \min_{ y^i \in \mathbb{R}^d} \sum_{i=1}^n \persp{f}^i(x^i, \vec y^i).
    \min_{\xi \in \mathbb{R}^d} \min_i f^i(\xi) = \min_{z \in \Delta^n} \min_{ w^i \in \mathbb{R}^d} \sum_{i=1}^n \persp{f}^i(z^i, w^i).
    \label{eq:convex_function_minimum}
  \end{align}
\end{lemma}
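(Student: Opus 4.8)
The plan is to establish the identity by proving the two inequalities separately, using only the two defining behaviors of the perspective recalled above: for $z > 0$ one has $\persp{f}^i(z, w) = z\, f^i(w/z)$, while the recession assumption gives $\persp{f}^i(0, w) = \imath_{\{0\}}(w)$, i.e.\ the term is $0$ when $w = 0$ and $+\infty$ otherwise. Convexity and positive homogeneity of each $\persp{f}^i$ are standard consequences of the biconjugate definition and will be taken for granted.

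For the direction ``$\ge$'' (the right-hand side is no larger than the left) I would start from an arbitrary $\xi \in \mathbb{R}^d$ and let $i^\star \in \arg\min_i f^i(\xi)$. I then build a feasible point for the right-hand side by concentrating all mass on the index $i^\star$: set $z^{i^\star} = 1$ and $z^i = 0$ otherwise, together with $w^{i^\star} = \xi$ and $w^i = 0$ for $i \neq i^\star$. By the scaling property $\persp{f}^{i^\star}(1, \xi) = f^{i^\star}(\xi)$, and by the recession property every remaining term $\persp{f}^i(0, 0)$ vanishes, so the right-hand objective at this point equals $f^{i^\star}(\xi) = \min_i f^i(\xi)$. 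Taking the infimum over $\xi$ then yields that the right-hand side is $\le$ the left-hand side.

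For the converse ``$\le$'' I would take any admissible pair $(z, \{w^i\})$ with $z \in \Delta^n$ and finite objective, and set $S \defeq \{ i : z^i > 0 \}$. Finiteness together with the recession property forces $w^i = 0$ (hence a zero contribution) for every $i \notin S$, so the objective collapses to $\sum_{i \in S} z^i f^i(w^i / z^i)$. Writing $\xi^i \defeq w^i/z^i$, this is a genuine convex combination of the values $\{ f^i(\xi^i) \}_{i \in S}$, since the weights $z^i$ are nonnegative and sum to one over $S$. A convex combination dominates its smallest entry, so picking $i^\star \in \arg\min_{i \in S} f^i(\xi^i)$ and evaluating the left-hand side at $\xi = \xi^{i^\star}$ gives $\min_i f^i(\xi^{i^\star}) \le f^{i^\star}(\xi^{i^\star}) \le \sum_{i \in S} z^i f^i(\xi^i)$, which is exactly the objective value. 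Taking the infimum over all admissible $(z, \{w^i\})$ delivers that the left-hand side is $\le$ the right-hand side, and the two inequalities together give the claim.

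The only point needing genuine care is the boundary $z^i = 0$: there I must invoke the recession-function assumption to rule out spurious finite contributions with $w^i \neq 0$ and to guarantee that the vanishing-mass indices really do drop out of the sum, so that what remains is a true convex combination over $S$. A secondary subtlety is attainment: if the minima are not attained one should phrase both directions with infima and a routine $\varepsilon$-argument, but the structure of the proof is unchanged.
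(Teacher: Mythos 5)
Your proof is correct. Note that the paper itself does not prove Lemma~\ref{lem:persp} at all --- it is stated as a known fact with a citation to an external reference --- so there is no in-paper argument to compare against; your two-inequality argument is a clean, self-contained substitute. The one place where the argument could go wrong is exactly the one you flag: at $z^i=0$ you must use the standing recession assumption $\persp{f}^i(0,w)=\imath_{\{0\}}(w)$ to force $w^i=0$ and make those terms vanish, so that the surviving terms $\sum_{i\in S} z^i f^i(w^i/z^i)$ form a genuine convex combination; without that assumption the identity can fail (a nonzero $w^i$ riding on $z^i=0$ could contribute a finite recession term and break the ``$\le$'' direction). Your handling of attainment via infima is also the right caveat, since the displayed ``min'' is only conventional. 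The appeal to convexity and positive homogeneity of $\persp{f}^i$ is actually unnecessary --- only the two evaluation rules $\persp{f}^i(z,w)=z f^i(w/z)$ for $z>0$ and $\persp{f}^i(0,w)=\imath_{\{0\}}(w)$ are used --- and the identity $\persp{f}^i(1,\xi)=f^i(\xi)$ implicitly uses $f^i=(f^i)^{**}$, i.e.\ lower semicontinuity, which the paper assumes elsewhere and you should state.
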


\subsection{Approximate Inference}

For a given graph ${\cal G} = ({\cal V}, {\cal E})$ and label (state) space
${\cal L} = \{0, \dotsc, L-1\}$ the task of inference in a label assignment
problem is to determine a minimizer of
\begin{align}
  \label{eq:elabeling}
  E_{\text{labeling}}(\Lambda) = \sum_{s} \theta_{s}^{\Lambda(s)} + \sum_{s \sim t} \theta_{st}^{\Lambda(s), \Lambda(t)},
\end{align}
where $\Lambda: {\cal V} \to {\cal L}$ (a mapping from nodes to states), and
$\theta_s^i$ and $\theta_{st}^{ij}$ are the unary and pairwise potentials,
respectively. The local nature of the modeled interactions means that the
above label assignment problem is an instance of a Markov Random Field
(MRF). Note that with respect to inference (i.e.\ finding a MAP solution)
conditional random fields (CRFs) are completely equivalent to MRFs, and our
use of the term ``MRF'' includes both CRFs and ``proper'' MRFs. In this work we focus on
problems with at most pairwise interactions between labels. In general such
labeling problems are difficult to solve exactly due to the NP-hardness of
many instances. A tractable approximation to $E_{\text{labeling}}$ is obtained
by ``lifting'' the problem to a higher dimensional setting: for each node $s
\in \cal V$ and each edge $(s,t) \in \cal E$ vectors $x_s \in \Delta^L$
and $x_{st} \in \Delta^{L^2}$ are introduced, which denote ``soft
one-hot'' encodings of the labels assigned to a node (or to an edge,
respectively) (see
e.g.~\cite{werner2007maxsum_review,wainwright2008graphical,sontag2011introduction}).
As result one obtains the following linear programming relaxation enabling
tractable approximate inference:
\begin{align}
E_{\text{LP-MRF}}(\vec x) &= \sum_{s, i} \theta_{s}^i x_s^i
+ \sum_{s \sim t} \sum_{i,j} \theta_{st}^{ij} x_{st}^{ij} \label{eq:lpmrf} \\
\text{subject to } & x_s^i = \sum_{j} x_{st}^{ij} \qquad x_t^j = \sum_i x_{st}^{ij} \nonumber \\
&x_s \in \Delta^L, \qquad x_{st}^{ij} \geq 0, \qquad \forall s,t,i,j
\nonumber
\end{align}
The first set of constraints are usually called \emph{marginalization
  constraints}. These constraints ensure that the labels assigned to edges are
consistent with the ones assigned at nodes. $x_s \in \Delta^L$ implies the
\emph{normalization constraint} $\sum_i x_s^i = 1$.

In many computer vision problems (e.g image denoising, optical flow) the
pairwise terms often do not depend on the actual labels $i$ and $j$ but only
on their difference (i.e.\ the ``height'' of jumps between labels). In this
case the pairwise potentials can be written as $\theta_{st}^{ij} =
\vartheta_{st}^{j-i}$, or even as $\theta_{st}^{ij} = \theta_{st}^{ji} =
\vartheta_{st}^{|i-j|}$ in the case of symmetric ones.

The number of unknowns in Eq.~\ref{eq:lpmrf} is in $O(L^2|{\cal E}|)$ which
can make inference with many labels costly. For certain pairwise potentials
the number of unknowns can be reduced to $O(L|{\cal E}|)$, e.g.\ $L^1$
potentials ($\theta_{st}^{ij} = w_{st} |i-j|$,
\cite{ishikawa1998segmentation,boykov98markov}), and truncated $L^1$
potentials ($\theta_{st}^{ij} = w_{st} \min\{ \tau_{st}, |i-j| \}$,
e.g.~\cite{zach2013connecting}). In this work we show that the number of
primal unknowns can be reduced from $O(L^2|{\cal E}|)$ to $O(KL|{\cal E}|)$
for piecewise linear potentials consisting of $K$ segments.

\section{The Main Result}
\label{sec:main_result}

In this section we state our main result, which generically shows that
piecewise linear pairwise potentials allow for a compact reformulation of
$E_{\text{LP-MRF}}$ (Eq.~\ref{eq:lpmrf}). In this section we only sketch the
proof, since it is based on more general constructions described in detail in
Sections~\ref{sec:conv} and~\ref{sec:metr}.
\begin{theorem}
  \label{thm:main}
  Let $\vartheta_{st}^{h} = \theta_{st}^{i,i+h}$ be a pairwise potential, that
  is a piecewise linear function with respect to $h$ consisting of $K$
  segments, having breakpoints only at integral values of $h$. Then there
  exists a reformulation of Eq.\ref{eq:lpmrf} that requires $2KL$ primal
  unknowns and $2L(K+1)+K$ linear constraints per edge in the graph.
\end{theorem}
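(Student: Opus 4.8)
The plan is to follow the two-ingredient strategy announced in the introduction and realized in Sections~\ref{sec:conv} and~\ref{sec:metr}: first reduce a \emph{convex} difference-based prior to a compact linear program, and then express a general piecewise linear prior as a \emph{pointwise minimum} of such convex pieces. The starting observation is elementary: any piecewise linear function $\vartheta_{st}^h$ of the integer argument $h=j-i$, with $K$ segments and integral breakpoints, can be written as $\vartheta_{st}^h = \min_{k} g^k(h)$ with each $g^k$ convex piecewise linear. A natural such decomposition cuts the graph of $\vartheta$ at its downward kinks (the breakpoints where the slope decreases); on each interval between consecutive downward kinks $\vartheta$ is already convex, and extending that piece linearly by its boundary slopes produces a convex majorant $g^k \ge \vartheta$ that coincides with $\vartheta$ on its interval, so that $\min_k g^k$ recovers $\vartheta$ exactly. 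This reduces the theorem to a per-piece convex construction together with a combination step.

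For the convex ingredient I would invoke Section~\ref{sec:conv}: for a convex piecewise linear cost in the label difference the quadratic block of transport variables $x_{st}^{ij}$ can be eliminated in favour of the cumulative (superlevel) variables $X_s, X_t$ of the notation, because the optimal transport for a convex cost is monotone and is therefore pinned down by the node marginals alone. Each linear segment then contributes a single family of $O(L)$ auxiliary unknowns together with the corresponding marginalization-style inequalities, so a convex piece assembled from $m$ segments costs $O(mL)$ unknowns and $O(mL)$ constraints in place of $O(L^2)$; the two families of marginalization constraints $x_s^i=\sum_j x_{st}^{ij}$ and $x_t^j=\sum_i x_{st}^{ij}$ account for the factor $2$ that appears in the final tally.

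For the combination I would apply Lemma~\ref{lem:persp}, which rewrites $\min_k g^k$ through the perspectives $\persp{g}^k(z^k,w^k)$, introducing a simplex weight $z\in\Delta^n$ over the $n$ convex pieces and, for each piece, a weighted copy $w^k$ of its variables. Since each $g^k$ is already the compact convex LP of the previous step, its perspective is again linear in $(z^k,w^k)$, so the assembled edge reformulation remains a single linear program of the claimed shape, and Lemma~\ref{lem:persp} guarantees that it reproduces the optimal value rather than merely relaxing it. The main obstacle I anticipate is precisely the accounting: one must choose the convex decomposition so that the linear extensions are merged with neighbouring pieces and no segment is double counted, track how the boundary terms and the simplex coupling contribute the additive $+1$ and $+K$, verify that the recession-function assumption $\imath_{\{0\}}$ survives the $\dom$ restrictions imposed by the bounds, and confirm that the reduced inequalities recover exactly the marginalization polytope of Eq.~\ref{eq:lpmrf} --- so that the final count comes out as $2KL$ unknowns and $2L(K+1)+K$ constraints per edge.
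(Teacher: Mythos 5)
Your overall two-step strategy (decompose into convex pieces, then combine the pieces via a pointwise minimum) is the paper's strategy, but both of your concrete steps have gaps. First, the decomposition: you cut $\vartheta$ at its downward kinks and extend each convex piece linearly by its boundary slopes, claiming the extension is a majorant $g^k \ge \vartheta$ so that $\min_k g^k = \vartheta$. This fails in general: if an early piece has negative slope and $\vartheta$ later rises steeply, the linear extension of that piece eventually drops below $\vartheta$, so the pointwise minimum undercuts $\vartheta$ far from that piece's interval. (Take $\vartheta = -h$ on $[0,1]$, $\vartheta = -1-2(h-1)$ on $[1,3]$, $\vartheta = -5+10(h-3)$ on $[3,5]$; the extension of the first piece gives $g^1(5) = -5 < 15 = \vartheta(5)$.) The paper sidesteps this entirely by decomposing into $K$ \emph{bounded-domain} linear pieces $\vartheta_k^h = \alpha^k h + \beta^k + \imath_{[\underline{h}^k,\overline{h}^k]}(h)$ as in Eq.~\ref{eq:min_theta}: the indicator makes each piece $+\infty$ off its own interval, so the majorant property is automatic, and it is exactly this one-piece-per-segment decomposition ($K$ pieces, each costing $2L$ unknowns and at most $2L+2$ constraints via Eq.~\ref{eq:fst_linear2}) that produces the stated counts $2KL$ and $2L(K+1)+K$; your coarser decomposition would not yield these numbers directly.

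Second, the combination: Lemma~\ref{lem:persp} alone does not suffice. Applying it to $\min_k f_{st}^k(y_s,y_t\,|\,x_s,x_t)$ with the couplings $y_s=x_s$, $y_t=x_t$ produces constraints of the form $z^k x_s = y_s^k$, which are \emph{bilinear} because $x_s$ is itself an optimization variable of the outer program. Your assertion that the assembled edge reformulation ``remains a single linear program'' silently assumes these can be linearized. That replacement --- substituting $\sum_k y_s^k = x_s$ for the family $z^k x_s = y_s^k$ without changing the optimal value --- is precisely the content of the paper's Lemma~\ref{lem:conveqiv}, proved by a Lagrangian duality argument showing the two programs share the same dual optimum. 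Without this step the construction is not an LP and the equivalence with Eq.~\ref{eq:lpmrf} is not established. (Your appeal to monotone optimal transport for the convex ingredient is a legitimate alternative to the paper's min-cut/superlevel construction, but it does not repair either of the two gaps above.)
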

\begin{proof}(Sketch:)
  In the following we consider a particular edge $(s,t)$ and drop the
  subscript $st$. Under the above assumptions on $\vartheta^h$ can be written
  as
  \begin{align}
    \vartheta^h = \min_{k \in \{0, \dots, K-1\} } \left\{ \alpha^k h + \beta^k + \imath_{[\underline{h}^k, \overline{h}^k]}(h) \right\},
    \label{eq:min_theta}
  \end{align}
  i.e.\ as minimum of linear functions with bounded (and convex) domains (see
  Fig.~\ref{fig:min_linear_bounded}). Using the results derived in
  Section~\ref{sec:conv}, potentials of the form
  \begin{align}
    \vartheta_k^h \defeq \alpha^k h + \beta^k + \imath_{[\underline{h}^k, \overline{h}^k]}(h)
    \label{eq:theta_k}
  \end{align}
  allow for a compact reformulation of Eq.~\ref{eq:lpmrf} using only $2L$
  primal unknowns and at most $2L+2$ constraints (see
  Eq.~\ref{eq:fst_linear2}). In Section~\ref{sec:metr} it is shown that the
  minimum of such $K$ potentials (represented by their compact reformulations)
  leads to a combined reformulation with $2LK$ unknowns and $2L(K+1)+K$
  constraints (see Eq.~\ref{eq:pmin_linear}).
\end{proof}

\begin{figure}[htb]
  \centering
  \subfigure[]{\includegraphics[height=10em]{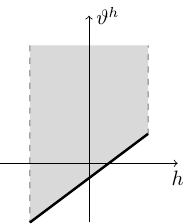}}
  \subfigure[]{\includegraphics[height=10em]{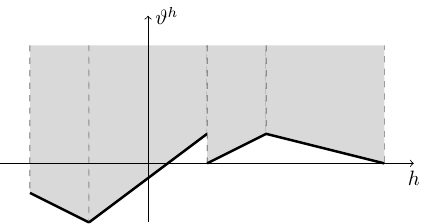}}
  \caption{(a) A bounded linear potential. (b) A piecewise linear (but not
    necessary continuous) potential represented as minimum of bounded linear
    ones.}
  \label{fig:min_linear_bounded}
\end{figure}

\begin{remark}
  Our way of counting constraints corresponds to the number of dual variables
  one needs to introduce in order to obtain a \emph{convenient} saddle-point
  formulation suitable for straightforward optimization e.g.\ via the
  (preconditioned) primal-dual method~\cite{pock2011diagonal}. Therefore we do
  not need to introduce dual variables whenever closed-form proximal steps are
  available. In particular, we do not include simple bounds constraints (such
  as non-negativity constraints) in our counting of constraints, but we always
  introduce enough dual variables to avoid non-trivial proximal steps.
\end{remark}

\begin{remark}
  We want to emphasis that rewriting piecewise linear potentials as minimum of
  bounded linear functions (as in Eq.~\ref{eq:min_theta}) also allows for
  efficient updates in message passing algorithms (such as belief propagation
  and its variants with guaranteed convergence). Efficient methods addressing
  (optionally truncated) $L^1$ and quadratic regularization are already
  presented in~\cite{felzenszwalb2006efficient}, and we can easily generalize
  their result: assume the pairwise potentials can be written as minimum of
  $K$ simple convex potentials as in Eq.~\ref{eq:min_theta}, then the lower
  envelope computation
  \begin{align}
    i \mapsto \min_j \left\{ \theta_t^j + \vartheta^{j-i} \right\}
  \end{align}
  can be done in $O(KL)$ time. This can be seen as follows: we rewrite the
  minimum envelope as
  \begin{align}
    i &\mapsto \min_k \min_j \left\{ \theta_t^j + \vartheta_k^{j-i} \right\} \nonumber \\
    &= \min_k \min_{j: \underline{h}^k \le j-i \le \overline{h}^k} \left\{ \theta_t^j + \alpha_k(j-i) \right\}, \nonumber
  \end{align}
  hence the minimum envelope can be computed in $O(KL)$ time if the inner
  envelope can be done in $O(L)$ time. Observe that the lower envelope
  \begin{align}
    \min_{j: \underline{h}^k \le j-i \le \overline{h}^k} \left\{ \theta_t^j + \alpha_k(j-i) \right\} \nonumber
  \end{align}
  is an instance of the min-filter problem, which can be solved in $O(L)$ time
  (e.g.~\cite{yuan2011running}). Interestingly, the very easily implementable
  online algorithm for min-filtering proposed in~\cite{lemire2006streaming}
  clearly resembles the lower envelope algorithm for quadratic costs
  in~\cite{felzenszwalb2006efficient}.
\end{remark}

\section{Piecewise Linear and Convex Pairwise Potentials}
\label{sec:conv}

In this section we consider pairwise potentials, that can be written as
pointwise maximum of affine functions in terms of the label difference $h =
j-i$, i.e.
\begin{align}
  \label{eq:convpl}
  \theta_{st}^{ij} = \vartheta_{st}^{j-i} = \max_{k \in \{0, \ldots, K-1\}} \left\{ \bar\alpha^k (j-i) + \bar\beta^k \right\}
\end{align}
for parameters $\bar\alpha^k$, $\bar\beta^k \in \mathbb{R}$. We assume that
the breakpoints of $\vartheta_{st}^{h}$ as a function of $h = j-i$ are located
on integral arguments $h$. Since pairwise potentials are only specified for
integral label values, this can be always achieved (at the expense of at most
doubling the number of affine functions).

In order to simplify the notation we assume w.l.o.g.\ edge-independent values
$\bar\alpha^k$ and $\bar\beta^k$ (and therefore drop the subscript $st$), but
all results below hold for edge-specific coefficients $\bar\alpha_{st}^k$ and
$\bar\beta_{st}^k$ as well. By definition $\vartheta_{st}^{h}$ is a convex and
piecewise linear function with respect to $h$, see also
Fig.~\ref{fig:maxmin}(a).

\subsection{Minimum Cut Graph Construction}

Our construction below is different to Ishikawa's graph cut approach solving
MRFs with convex and symmetric priors~\cite{ishikawa2003exact}, but can be
seen as generalization of his earlier construction
in~\cite{ishikawa1998segmentation}. The main benefits of our proposed
construction can be summarized as follows: first, it is very intuitive to
understand; second, it naturally allows asymmetric convex potentials; and
finally, it immediately enables extensions to more isotropic regularizers that
can be relevant in image processing applications.

The labeling problem with convex pairwise priors is solved by computing the
minimum-cut in a weighted graph. The node set of the graph is $\{ S, T\} \cup
\{a_s^i\}_{s \in {\cal V}, i \in \{0, \dotsc, L\}}$, where $S$ and $T$ are the
source and sink, respectively. The edge set contains infinity links $(S,
a_s^0)$ and $(a_s^L, T)$ for all $s \in \cal V$. Node $a_s^i$ is connected to
node $a_s^{i+1}$ with a directed edge $e_s^i$. A label $i$ is assigned to $s$
if the minimal cut goes through edge $e_s^i$. In order to ensure that only one
label is assigned at a node $s$, there are directed edges with infinite weight
from nodes $a_s^{i+1}$ to $a_s^i$. Finally, a subset of directed pairwise
edges connecting $a_s^i$ with $a_t^j$ (and vice versa) will be included into
the graph as described in the following.

The convex potential in Eq.~\ref{eq:convpl} can be equivalently written as
(with $h = j-i$)
\begin{align}
  \label{eq:convpl2}
  \vartheta_{st}^{h} = \sum_{k \in \{0, \dotsc, K-1\}} \left[ \gamma^k (h + \delta^k) \right]_+ + \alpha h + \beta.
\end{align}
It will be convenient later to explicitly include the affine term, $\alpha h +
\beta$. Note that $\beta$ only affects the objective of the minimizer, not the
minimizer itself, and hence can be ignored in the graph construction. The term
$\alpha h = \alpha j - \alpha i$ does not depend jointly on $i$ and $j$, and
consequently can be (temporarily) absorbed into the unary potentials for the
graph construction (e.g.\ $\theta_s^i$ and $\theta_t^j$ are augmented with
$-\alpha i$ and $\alpha j$, respectively). Thus, we focus on the first
expression in Eq.~\ref{eq:convpl2} below.

With our above assumption of integral breakpoints we have $\delta^k \in
\mathbb{Z}$ without loss of generality. In the following we focus on a single
summand, $\left[ \gamma^k (h + \delta^k) \right]_+$. If $\delta^k = 0$, the
term $\left[ \gamma^k h \right]_+$ corresponds to a ``one-sided'' $L^1$ (or
total variation) regularizer and can be solved by adding lateral directed
edges into the graph (see Fig.~\ref{fig:graph_cut}(a)). If $\delta^k \ne 0$,
one can temporarily reinterpret label value $j$ as $j + \delta^k$ and again
obtain an asymmetric $L^1$-type smoothness prior \emph{but between label $i$
  and a ``shifted'' label $j+\delta^k$}. Consequently, for each term $\left[
  \gamma^k (h + \delta^k) \right]_+$ with $\delta^k \ne 0$ directed diagonal
edges are inserted into the graph (see Fig.~\ref{fig:graph_cut}(b)).

\begin{figure}
  \centering
  \subfigure[Asymm.\ $L^1$]{\includegraphics[height=5.cm]{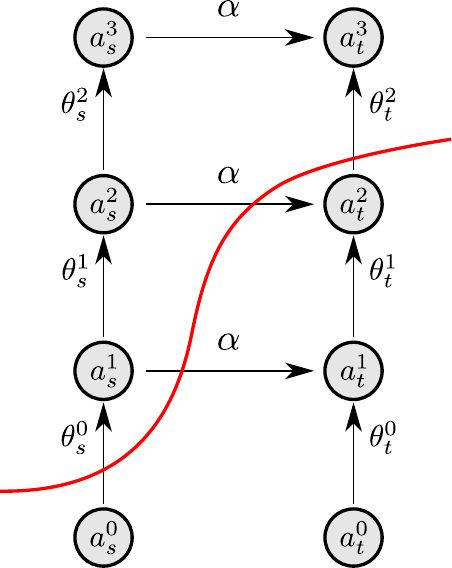}} \hspace{1em}
  \subfigure[``Shifted'' $L^1$]{\includegraphics[height=5.cm]{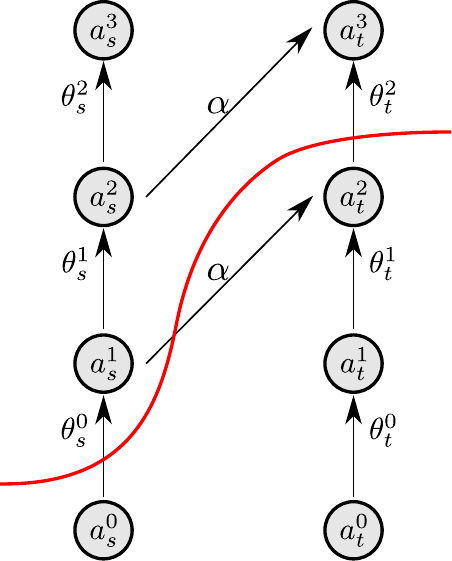}}
  \caption{Graph cut construction for asymmetric $L^1$ (a) and ``shifted''
    $L^1$ potentials (b). The red curve illustrates a potential cut. }
  \label{fig:graph_cut}
\end{figure}

\subsection{The Equivalent Linear Program}

The resulting graph can be immediately written as linear program\footnote{The
  expressions $[\xi]_+$ appearing in the objective can be removed leading to a
  proper LP after introducing a non-negative $\zeta$ with $\zeta \ge \xi$. },
\begin{align}
  E&_{\text{Cvx-Cut}}(\vec u) = \sum_{s,i} \theta_s^i \left(u_s^{i+1} - u_s^{i} \right) + \sum_{s \sim t} \beta \\
  &+ \sum_{s \sim t}\! \left( \alpha \sum_i (u_t^i - u_s^i)
  + \sum_{k,i} \left[ \gamma^k \left(u_s^i - u_t^{i+\delta^k} \right) \right]_+ \right) \nonumber \\
  \text{s.\ t. } & u_s^0 = 0, \qquad u_s^L = 1, \qquad u_s^{i+1} \geq u_s^i, \nonumber
\end{align}
where $\vec u: {\cal V} \times \{0, \dotsc, L\} \to [0, 1]$ encodes whether
node $a_s^i$ belongs to the source ($u_s^i=0$) or to the sink
($u_s^i=1$). Note that we explicitly state the contribution of the linear
part, $\bar\alpha h$ in Eq.~\ref{eq:convpl2}, to the unaries. In order to keep
the equations simple, we introduce the convention that ``out-of-bounds''
values $u_s^i$ yield 0 if $i < 0$ and 1 if $i \ge L$ for all $s \in \cal V$.

Observe that $u_s^i$ as function of $i$ is a superlevel representation and can
therefore be written as $u_s^i = X_s^i$ for some $x_s \in \Delta^{L}$. Thus,
we can rewrite $E_{\text{Cvx-Cut}}$ as
\begin{align}
  E_{\text{Cvx-LP}}(\vec x) & = \sum_{s,i} \theta_s^i x_s^i
  + \sum_{s \sim t, i} \sum_{k} \left[ \gamma^k \left(X_s^i - X_t^{i+\delta^k} \right) \right]_+ \nonumber \\
  &+ \sum_{s \sim t} \left( \alpha \sum_i (X_t^i - X_s^i) + \beta \right)
  \label{eq:cvx_lp} \\
  \text{s.t. } & x_s \in \Delta^L. \nonumber
\end{align}
In order to use this result as a building block for the construction in the
following Section~\ref{sec:metr}, we introduce for the pairwise terms
\begin{align}
  f_{st}^{\text{cvx}}&(y_s, y_t | x_s, x_t) \defeq \alpha_{st} \sum_i (Y_t^i - Y_s^i) + \beta_{st}
  \label{eq:fst_cvx} \\
  &+ \sum_{k,i} \left[ \gamma_{st}^k \left(Y_s^i - Y_t^{i+\delta_{st}^k} \right) \right]_+
  + \imath_{\Delta^L}(y_s) + \imath_{\Delta^L}(y_t). \nonumber
\end{align}
We explicitly added the (redundant) simplex constraints on $y_s$ and $y_t$ in
order to obtain a trivial recession function for $f_{st}^{\text{cvx}}$, which
will be important in Section~\ref{sec:metr}. Now, $E_{\text{Cvx-LP}}$ can be
rewritten as
\begin{align}
  E_{\text{Cvx-LP}}(\vec x) &=
  \sum_{s,i} \theta_s^i x_s^i + \sum_{s \sim t} \min_{\begin{smallmatrix} y_s:y_s = x_s \\ y_t:y_t = x_t \end{smallmatrix}} f_{st}^{\text{cvx}}(y_s, y_t | x_s, x_t)
  \label{eq:cvx_lp2}
\end{align}
subject to $x_s \in \Delta^L$. Observe that $f_{st}^{\text{cvx}}$ introduces
$2L$ unknowns, $y_s^i$ and $y_t^i$, per edge $(s,t)$, and enforces $2L+KL =
L(K+2)$ constraints (where we identify $[\cdot]_+$ with one inequality
constraint). Of course, the extra $2L$ unknowns, $y_s^i$ and $y_t^i$, and $2L$
of the constraints can be discarded immediately by applying e.g.\ the
constraint $y_s^i = x_s^i$, but $f_{st}^{\text{cvx}}$ as stated in
Eq.~\ref{eq:fst_cvx} will be important in Section~\ref{sec:metr}.

Overall, depending on the values of $K$ and $L$ optimization of
$E_{\text{Cvx-LP}}$ potentially requires far less memory than optimizing the
generic LP relaxation $E_{\text{LP-MRF}}$ (Eq.~\ref{eq:lpmrf}). A particular
and important instance of convex potentials are $L^1$-type ones, $h \mapsto
\alpha_{st} |h| + \beta_{st}$. For completeness we state one respective
specialization of $f_{st}^{\text{cvx}}$ to $f_{st}^{L^1}$:
\begin{align}
  f_{st}^{L^1}(y_s, y_t | x_s, x_t) &\defeq \alpha_{st} \sum_i \left| Y_t^i - Y_s^i \right| + \beta_{st} 
  + \imath_{\Delta^L}(y_s) + \imath_{\Delta^L}(y_t).
  \label{eq:fst_L1}
\end{align}

\subsection{Linear Priors with Bounded Domains}

In this section we discuss the particular prior relevant for the main result
in Section~\ref{sec:main_result}, where specific convex potentials of the
shape
\begin{align}
  \vartheta^h &= \alpha h + \beta + \imath_{[\underline{h}, \overline{h}]}(h) \nonumber \\
  &= \alpha h + \beta + M \left[ h - \overline{h} \right]_+ + M \left[ \underline{h} - h \right]_+
\end{align}
(with $M$ being a sufficiently large constant) are considered. In this
particular setting $f_{st}^{\text{cvx}}$ (Eq.~\ref{eq:fst_cvx}) reads as
\begin{align}
  f_{st}^{\text{linear}}&(y_s, y_t | x_s, x_t) \defeq \alpha \sum_i \left( Y_t^i - Y_s^i \right) + \beta
  \label{eq:fst_linear} \\
  &+ M \sum_i \left[ Y_s^i - Y_t^{i+\overline{h}} \right]_+ + M \sum_i \left[ Y_t^{i+\underline{h}} - Y_s^i \right]_+ \nonumber
\end{align}
subject to $y_s, y_t \in \Delta^L$. With $M \to \infty$ the penalizer terms
transform into constraints $Y_s^i \le Y_t^{i+u}$ and $Y_s^i \ge Y_t^{i+l}$
(which correspond to infinity links in the respective minimum-cut graph), and
$f_{st}^{\text{linear}}$ therefore equivalently reads as
\begin{align}
  f&_{st}^{\text{linear}}(y_s, y_t | x_s, x_t) \defeq \alpha \sum_i \left( Y_t^i - Y_s^i \right) + \beta
  \label{eq:fst_linear2} \\
  &+ \imath\left\{ Y_s^i \le Y_t^{i+\overline{h}},\, Y_s^i \ge Y_t^{i+\underline{h}} \right\}
  + \imath_{\Delta^L}(y_s) + \imath_{\Delta^L}(y_t), \nonumber
\end{align}
and by plugging $f_{st}^{\text{linear}}$ as $f_{st}^{\text{cvx}}$ into
Eq.~\ref{eq:cvx_lp2} we obtain a program with $2L$ unknowns per edge and at
most $2L+2$ constraints (or dual variables) as claimed in the proof of
Theorem~\ref{thm:main}.

\section{Minimum of Pairwise Potentials}
\label{sec:metr}

In this section we show that the (pointwise) minimum of compactly
representable pairwise potentials leads again to a compact representation of
the corresponding linear program. This applies e.g.\ to pairwise potentials
that are the minimum of (not necessarily symmetric) $L^1$-type pairwise priors
(see Fig.~\ref{fig:maxmin}(b)). It is well-known that $L^1$-type priors lead
to linear programs with $O(L)$ unknowns per edge
(e.g.~\cite{ishikawa1998segmentation}, or apply the result from the previous
section). A corollary from the construction presented in the following is,
that the minimum of $K$ $L^1$-type priors only requires $O(LK)$ primal
unknowns without loosening the convex relaxation, compared to $E_{\text{LP-MRF}}$. We will call the pointwise
minimum of pairwise ``elementary'' potentials a ``min-potential'' in the
following.

% \begin{figure}[htb]
%   \centering
%   \includegraphics[width=\linewidth]{figures/equivalence_circle.pdf}
%   \caption{Equivalences}
%   \label{fig:equivalences}
% \end{figure}

In order to show the equivalence of a compact linear program for
min-potentials with the standard relaxation $E_{\text{LP-MRF}}$
(Eq.~\ref{eq:lpmrf}) we proceed in two steps:
\begin{itemize}
\item Assume we are given a pairwise potential, that can be written as
  point-wise minimum of some elementary potentials (see e.g.\
  Fig~\ref{fig:point_vs_term}(a)). In Section~\ref{sec:relationfull} it is
  shown that such potentials can be reformulated as term-wise minimum of
  elementary potentials (as illustrated in Fig~\ref{fig:point_vs_term}(b)).
\item If the elementary potentials are chosen such that they have a compact
  reformulation (as the ones discussed in Section~\ref{sec:conv}), one
  can substitute the elementary potentials by their corresponding compact
  reformulation. In Section~\ref{sec:relationcompact} the equivalence of the
  resulting reformulation is shown, and some relevant examples are provided.
\end{itemize}
Overall, the equivalence of $E_{\text{LP-MRF}}$ (Eq.~\ref{eq:lpmrf}) with
compact reformulations is thus established.

\begin{figure}[htb]
  \centering
  \subfigure[Point-wise]{\includegraphics[width=0.3\textwidth]{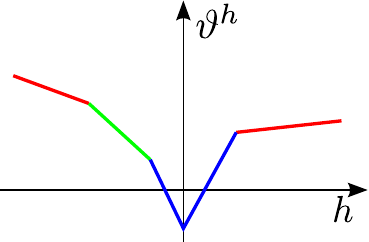}}
  \subfigure[Term-wise]{\includegraphics[width=0.3\textwidth]{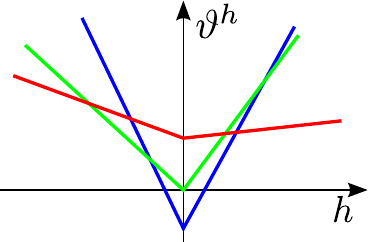}}
  \caption{Interpreting a pairwise potential either as point-wise minimum (a)
    or term-wise minimum (b). }
  \label{fig:point_vs_term}
\end{figure}
 
The following derivations make repeated use of Lemma~\ref{lem:persp}, which
allows us to rewrite a minimum of convex functions as convex minimization
problem in general. Application of Lemma~\ref{lem:persp} on the terms
representing elementary potentials in order to obtain min-potentials leads to
non-convex bilinear constraints as shown below. The following lemma states in
a general setting, that these bilinear constraints induced by application of
Lemma~\ref{lem:persp} can be ``linearized'' without affecting the minimum.
\begin{lemma}
  \label{lem:conveqiv}
  Let $\{f^k\}_{ k=0, \dotsc, K-1}$ be a family of convex l.s.c.\ functions
  (with trivial recession function). Further define the following minimization
  problems,
  \begin{align}
    F_0(w | \eta) & \defeq \min_k f^k(\eta, w) =\min_k \min_{y: y = \eta} f^k\left( y, w \right) \nonumber \\ \nonumber\\
    F_1(z, y, w) & \defeq \sum_k \persp{f^k}\left( z^k, (y^k, w^k) \right)
    +  \sum_k \imath \{ z^k \eta = y^k \} + \imath_{ \Delta }(z) \nonumber \\ \nonumber\\
    F_2(z, y, w) &\defeq \sum_k \persp{f^k}\left( z^k, (y^k, w^k) \right)
    + \imath\left\{ \sum\nolimits_k y^k = \eta  \right\} + \imath_{\Delta}(z). \nonumber
  \end{align}
  We have that
  \begin{equation}
     \min_w F_0(w | \eta) = \min_{z,y,w} F_1(z,y,w) = \min_{z,y,w} F_2(z,y,w).
  \end{equation}
\end{lemma}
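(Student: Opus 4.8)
The plan is to prove the chain of equalities by first identifying $\min_w F_0(w\mid\eta)$ with $\min F_1$ through a direct application of Lemma~\ref{lem:persp}, and then establishing $\min F_1 = \min F_2$ by a two-sided bound. Throughout I would rely on the trivial-recession assumption, which gives the l.s.c.\ perspective the explicit form $\persp{f^k}(z^k,v) = z^k f^k(v/z^k)$ for $z^k>0$ and $\persp{f^k}(0,v) = \imath_{\{0\}}(v)$; in particular, any feasible point of finite value must satisfy $y^k=0$ and $w^k=0$ whenever $z^k=0$, a fact I would invoke repeatedly to control the boundary of the simplex.

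For $\min_w F_0 = \min F_1$ I would eliminate $y$ using the per-index constraint $y^k = z^k\eta$. For $z^k>0$ this turns each term into $\persp{f^k}(z^k,(z^k\eta,w^k)) = z^k f^k(\eta, w^k/z^k)$, which is exactly the perspective of the single-argument function $g^k(\cdot)\defeq f^k(\eta,\cdot)$ evaluated at $(z^k,w^k)$; the $z^k=0$ case is consistent because both sides reduce to $\imath_{\{0\}}(w^k)$. Hence the $y$-reduced objective of $F_1$ is $\sum_k \persp{g^k}(z^k,w^k)$ minimized over $z\in\Delta$ and $w$, and Lemma~\ref{lem:persp} applied to the family $\{g^k\}$ gives directly $\min_{z,w}\sum_k \persp{g^k}(z^k,w^k) = \min_w\min_k g^k(w) = \min_w F_0(w\mid\eta)$.

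For the easy inclusion $\min F_2 \le \min F_1$ I would simply note that the $F_1$ feasible set is contained in the $F_2$ feasible set: summing $y^k = z^k\eta$ over $k$ and using $z\in\Delta$ yields $\sum_k y^k = (\sum_k z^k)\eta = \eta$, while the perspective objective is identical in both problems. Minimizing the same objective over a larger feasible set can only lower the value.

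The hard part will be the reverse bound $\min F_2 \ge \min F_1$, i.e.\ showing that replacing the per-index constraints $y^k = z^k\eta$ by the single aggregate constraint $\sum_k y^k = \eta$ does not open up a strictly cheaper ``mixed'' solution. Here I would take a finite-value minimizer $(z,y,w)$ of $F_2$, drop the indices with $z^k=0$ (on which $y^k=w^k=0$ by the recession argument), and rewrite the objective as $\sum_{k:z^k>0} z^k f^k(y^k/z^k, w^k/z^k)$ with the per-term first arguments $a^k\defeq y^k/z^k$ satisfying the $z$-weighted averaging identity $\sum_k z^k a^k = \eta$. The goal is to show this value cannot fall below $\min_k\min_w f^k(\eta,w)$ and hence to recover a point feasible for $F_1$. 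This is precisely the step where the structure of the elementary potentials must enter: one has to rule out that averaging the per-term marginals $a^k$ (whose weighted mean is only required to equal $\eta$) beats the pointwise value at $\eta$. I expect the main obstacle to be establishing that the optimal weights $z$ concentrate on a single index, so that the aggregate and per-index constraints coincide at the optimum; verifying this concentration, rather than the generic convex-hull behaviour one would get for arbitrary convex $f^k$, is the real content of the lemma and the place I would spend most of the effort.
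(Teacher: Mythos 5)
Your reduction of $\min_w F_0$ to $\min F_1$ via Lemma~\ref{lem:persp} and your one-line argument for $\min F_2 \le \min F_1$ both match the paper. The genuine gap is the remaining direction $\min F_2 \ge \min F_1$: you correctly identify it as the crux, but you do not supply an argument for it --- you only describe the obstacle (that a ``mixed'' feasible point of $F_2$, with per-branch marginals $a^k = y^k/z^k$ whose $z$-weighted average equals $\eta$, might be cheaper than any single branch evaluated at $\eta$) and defer it. The paper closes this direction by a different, dual route: it computes the Lagrangian duals $F_1^*(\lambda) = \min_k\{\eta^T\lambda^k - (f^k)^*(\lambda^k,\mathbf{0})\}$ and $F_2^*(\nu)$, observes that $F_2^*$ is $F_1^*$ restricted to $\lambda = (\nu,\dotsc,\nu)$, and then argues that a maximizer $\lambda^*$ of $F_1^*$ can be collapsed to $\nu^* = (\lambda^*)^l$ (with $l$ the minimizing index) without losing objective value, after which strong duality yields $\min F_1 = \min F_2$. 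So even to reproduce the paper's argument you would have had to leave your primal averaging picture and pass to the dual.

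That said, your instinct that ``the structure of the elementary potentials must enter'' and that concentration of $z$ on a single index is the real issue is well founded, and the obstacle you flag is not merely technical. Take $K=2$ with no $w$-variables, $f^1(y) = |y| + \imath_{[-1,1]}(y)$, $f^2(y) = |y-1| + \imath_{[-1,2]}(y)$, and $\eta = 1/2$: both functions are convex, l.s.c.\ and have trivial recession, and $\min F_0 = \min F_1 = 1/2$, yet $z=(\tfrac12,\tfrac12)$, $y=(0,\tfrac12)$ is feasible for $F_2$ (indeed $y^1+y^2=\eta$) with objective $\tfrac12 f^1(0) + \tfrac12 f^2(1) = 0$. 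So under the hypotheses as stated the inequality $\min F_2 \ge \min F_1$ can genuinely fail; correspondingly, the paper's collapsing step is the delicate point, since $F_2^*(\nu^*) = \min_k\{\eta^T\nu^* - (f^k)^*(\nu^*,\mathbf{0})\}$ re-evaluates every conjugate at the common point $\nu^*$ and need not equal $\eta^T(\lambda^*)^l - (f^l)^*((\lambda^*)^l,\mathbf{0})$. In short, your proposal is incomplete exactly where the lemma is hardest, and completing it requires either justifying that collapsing step or imposing additional assumptions on the family $\{f^k\}$ that rule out profitable mixing; the generic convexity and recession hypotheses alone do not suffice.
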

\begin{proof}
  The equivalence of $F_0$ and $F_1$ follows immediately from
  Lemma~\ref{lem:persp}. The equality $\min F_1(z,y,w) = \min F_2(z,y,w)$ is
  shown as follows: observe that $\min_{z, y, w} F_2(z, y, w) \le \min_{z,
    y,w} F_1(z, y, w)$, since any $(z,y,w)$ feasible for $F_1$ is also
  feasible for $F_2$ (summing all constraints $z^k \eta = y^k$ over $k$ and
  using $\sum_k z^k = 1$ implies $\sum\nolimits_k y^k = \eta$).

   To prove $\min_{z, y, w} F_2(z, y, w) = \min_{z, y, w} F_1(z, y, w)$ we
   consider the Lagrangian duals of the two convex programs. We have (also
   applying Lemma~\ref{lem:persp})
  \begin{align}
    F_1^*(\lambda) &= \!\!\! \min_{z \in \Delta, y, w} \sum_k \left( \persp{f^k}(z^k, (y^k,w^k)) \! +\! (\lambda^k)^T \! (z^k \eta - y^k) \right) \nonumber \\
    &= \min_k \left\{ \min_{\zeta, \xi} f^k(\zeta,\xi) + (\lambda^k)^T \eta - (\lambda^k)^T \zeta \right\} \nonumber \\
    &= \min_k \left\{ \eta^T \lambda^k - \max_{\zeta,\xi} \left\{ \zeta^T \lambda^k - f^k(\zeta,\xi) \right\} \right\} \nonumber \\
    &= \min_k \left\{ \eta^T \lambda^k - (f^k)^*(\lambda^k, \mathbf{0}) \right\}. \nonumber
  \end{align}
  Analogously we obtain
  \begin{align}
    F_2^*(\nu) &= \min_{z \in \Delta, y,w} \sum_k \persp{f^k}(z^k, (y^k, w^k)) \!+ \!\nu^T \!\left( \eta - \sum\nolimits_k y^k \right) \nonumber \\
    &= \nu^T \eta + \min_k \min_{\zeta,\xi} \left\{ f^k(\zeta,\xi) - \nu^T \zeta \right\} \nonumber \\
    &= \nu^T \eta + \min_k \left\{ - \max_{\zeta,\xi} \left\{\nu^T \zeta - f^k(\zeta,\xi) \right\} \right\} \nonumber \\
    &= \nu^T \eta + \min_k \left\{- (f^k)^*(\nu, \mathbf{0}) \right\} \nonumber \\
    &= \min_k \left\{ \eta^T \nu - (f^k)^*(\nu, \mathbf{0}) \right\} \nonumber.
  \end{align}
  Both dual programs have essentially the same objective $\lambda \mapsto
  \min_k \left\{ \eta^T \lambda^k - (f^k)^*(\lambda^k, \mathbf{0}) \right\}$,
  but $F_2^*$ enforces additional constraints on its argument ($\lambda =
  (\nu, \dotsc, \nu)$ for some $\nu$), from which the already known fact
  $\max_\lambda F_1^*(\lambda) \ge \max_\nu F_2^*(\nu)$ follows. But any
  maximizer $\lambda^*$ of $F_1^*$ can be converted to a feasible solution of
  $F_2^*$ with the same objective by setting $\nu^* = (\lambda^*)^l$, where
  \begin{align}
    l \in \arg\min_k \left\{ \eta^T \lambda^k - (f^k)^*(\lambda^k, \mathbf{0}) \right\}. \nonumber
  \end{align}
  Therefore both the dual programs have the same optimal value, and
  $\min F_1 = \min F_2$ follows from strong duality.
\end{proof}

In the following we repeatedly apply Lemma~\ref{lem:conveqiv} to obtain
compact and convex reformulations for min-potentials. The node variables $x_s$
and $x_t$ involved in the pairwise potentials via the marginalization
constraints (or a respective variant) will attain the role of $\eta$ in the
lemma. Since these node variables are subject to optimization as well (on the
outer scope), Lemma~\ref{lem:conveqiv} is important to replace occuring
bilinear constraints by linear ones.

\subsection{Term-Wise Minimum of Potentials}
\label{sec:relationfull}

Let the pairwise potentials $\theta_{st}^{ij}$ be written as pointwise minimum
of elementary potentials $\theta_{st}^{ij}$, i.e.
\begin{align}
  \theta_{st}^{ij} = \min_{k \in \{0, \dotsc, K-1\} } \theta_{st}^{ijk}. \nonumber
\end{align}
In this section we show the equivalence of
\begin{align}
  E_{\text{term-wise}}(\vec x) &= \sum_{s,i} \theta_{s}^i x_s^i
  + \sum_{s \sim t} \min_k \left\{ \sum\nolimits_{ij}  \theta_{st}^{ijk} x_{st}^{ijk} \right\} \\
  \text{s.\ t. } & x_s^i = \sum_{jk} x_{st}^{ijk} \qquad x_t^j = \sum_{ik} x_{st}^{ijk} \nonumber \\
  & x_s \in \Delta^L \qquad x_{st}^{ijk} \geq 0 \nonumber
\end{align}
and
\begin{align}
  E_{\text{point-wise}}(\vec x) &= \sum_{s,i} \theta_{s}^i x_s^i
  + \sum_{s \sim t} \sum_{ij} \min_k \theta_{st}^{ijk} x_{st}^{ij} \\
  \text{s.\ t. } & x_s^i = \sum_{j} x_{st}^{ij} \qquad x_t^j = \sum_i x_{st}^{ij} \nonumber \\
  & x_s \in \Delta^L \qquad x_{st}^{ij} \geq 0 \nonumber
\end{align}
Note that the difference between the two programs is the position of
$\min_k$. In $E_{\text{point-wise}}$ a new pairwise potential
$\breve\theta_{st}^{ij}$ is formed as the point-wise minimum of given ones,
\begin{align}
  \breve\theta_{st}^{ij} \defeq \min_k \theta_{st}^{ijk},
\end{align}
whereas in $E_{\text{term-wise}}$ a particular active potential is selected
per edge (see also Fig.~\ref{fig:point_vs_term}). The equivalence of the two
energies is an immediate consequence of the following fact:

\begin{lemma}
  \label{lem:point_wise}
  For given $x_s$ and $x_t \in \Delta^L$ let ${\cal M}(x_s, x_t)$ denote the
  (local) marginalization constraints
  \begin{align}
    {\cal M}(x_s, x_t) \defeq \left\{ x_{st} \in \Delta^{L^2}:
    x_s^i = \sum\nolimits_{j} x_{st}^{ij}, x_t^j = \sum\nolimits_{i} x_{st}^{ij} \right\}. \nonumber
  \end{align}
  The following two pairwise costs
  \begin{align}
    p_{st}(x_{st} | x_s, x_t) &= \min_k \left\{ \sum\nolimits_{ij} \theta_{st}^{ijk} x_{st}^{ij} \right\} + \imath_{{\cal M}(x_s, x_t)}(x_{st}) \nonumber
  \end{align}
  and
  \begin{align}
    q_{st}(x_{st} | x_s, x_t) &= \sum_{i,j} \min_k \theta_{st}^{ijk} x_{st}^{ij} + \imath_{{\cal M}(x_s, x_t)}(x_{st}) \nonumber
  \end{align}
  are equivalent, i.e.
  \begin{align}
    \min_{x_{st}} p_{st}(x_{st} | x_s, x_t) = \min_{x_{st}} q_{st}(x_{st} | x_s, x_t).
  \end{align}
\end{lemma}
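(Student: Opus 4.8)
The plan is to prove the claimed equality by establishing the two inequalities separately. One direction is immediate: for any fixed $k$ and any $x_{st}\in{\cal M}(x_s,x_t)$ we have $\min_{k'}\theta_{st}^{ijk'}\le\theta_{st}^{ijk}$, and since $x_{st}^{ij}\ge 0$ on $\Delta^{L^2}$ this gives $\sum_{ij}(\min_{k'}\theta_{st}^{ijk'})\,x_{st}^{ij}\le\sum_{ij}\theta_{st}^{ijk}x_{st}^{ij}$; taking the minimum over $k$ and then over $x_{st}$ yields $\min_{x_{st}} q_{st}\le\min_{x_{st}} p_{st}$. The whole content of the lemma therefore lies in the reverse inequality $\min_{x_{st}} p_{st}\le\min_{x_{st}} q_{st}$, i.e.\ in showing that committing to a single active index $k$ across the whole edge is no worse than selecting the cheapest index independently at every pair $(i,j)$.

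For the reverse inequality I would argue on the primal side. Since $\breve\theta_{st}^{ij}\defeq\min_k\theta_{st}^{ijk}$ is a fixed cost matrix, $\min_{x_{st}} q_{st}$ is an ordinary transportation problem over the polytope ${\cal M}(x_s,x_t)$, whose optimum is attained at a vertex $\bar x$. On the support of $\bar x$ I pick for each active pair $(i,j)$ an index $k(i,j)\in\arg\min_k\theta_{st}^{ijk}$. The goal is then to produce a single global index $\hat k$ together with a (possibly different) coupling $\hat x\in{\cal M}(x_s,x_t)$ such that $\sum_{ij}\theta_{st}^{ij\hat k}\hat x^{ij}\le\sum_{ij}\breve\theta_{st}^{ij}\bar x^{ij}$, which by definition of $p_{st}$ gives exactly $\min_{x_{st}} p_{st}\le\min_{x_{st}} q_{st}$. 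I would attempt this by an uncrossing/exchange argument: whenever two edges in the support carry different active indices, reroute mass along the corresponding alternating cycle in ${\cal M}(x_s,x_t)$ so as to make the active index more uniform while not increasing the objective, iterating until a single index is active throughout.

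The step I expect to be the main obstacle is precisely this consolidation: an exchange of mass that equalises the active index can in principle raise the cost, so the argument must exploit the specific structure of the elementary potentials $\theta_{st}^{ijk}$ — that they depend only on the label difference $h=j-i$ and arise as the linear pieces of a common potential, so that neighbouring differences share a compatible piece. Quantifying this compatibility — essentially a Monge-type inequality for the pieces — is where the real work sits. As a cross-check, I would note that the Lagrangian-dual route used for Lemma~\ref{lem:conveqiv} only reproduces the easy inequality here: the dual feasible region of $q_{st}$ is the intersection over $k$ of the per-index dual regions underlying $p_{st}$, so a single-index dual optimum need not be feasible for that intersection, and equality cannot be read off the duals without the same structural input.
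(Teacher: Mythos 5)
Your easy direction ($\min q_{st} \le \min p_{st}$) is exactly the paper's, and is fine. The hard direction is where your proposal stops being a proof: the consolidation step --- rerouting mass along alternating cycles until a single index $k$ is active everywhere without increasing cost --- is precisely the content of the claim, and you explicitly leave it open, deferring to an unstated ``Monge-type'' compatibility of the pieces. This is not a fixable technicality within your strategy, because for general elementary potentials no single index $\hat k$ together with any coupling $\hat x \in {\cal M}(x_s,x_t)$ can match $\min q_{st}$. Concretely, take $L=K=2$, $x_s=x_t=(\tfrac12,\tfrac12)$, $\theta_{st}^{ij1}=10\cdot[i=1]$ and $\theta_{st}^{ij2}=10\cdot[i=0]$. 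Then $\min_k\theta_{st}^{ijk}=0$ for all $(i,j)$, so $\min_{x_{st}} q_{st}=0$, whereas for \emph{every} $x_{st}\in{\cal M}(x_s,x_t)$ one has $\sum_{ij}\theta_{st}^{ij1}x_{st}^{ij}=10\,x_s^1=5$ and $\sum_{ij}\theta_{st}^{ij2}x_{st}^{ij}=10\,x_s^0=5$, hence $\min_{x_{st}} p_{st}=5$. So the reverse inequality you are after fails for arbitrary $\theta_{st}^{ijk}$, and any completion of your argument must import structural hypotheses that the lemma as stated does not contain.

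The paper takes a genuinely different route that avoids consolidation altogether: it writes $p_{st}=\min_k p_{st}^k$ and invokes Lemma~\ref{lem:conveqiv} to replace the outer $\min_k$ by a single convex program $\tilde p_{st}$ in variables $y^{ijk}\in\Delta^{KL^2}$ whose marginalization constraints are \emph{aggregated} over $k$ ($x_s^i=\sum_{jk}y^{ijk}$, $x_t^j=\sum_{ik}y^{ijk}$); the point-wise-minimum assignment $(y^*)^{ijk}=[k=k^{ij}]\,(x_{st}^*)^{ij}$ is feasible there with objective $\min q_{st}$, which gives $\min \tilde p_{st}\le\min q_{st}$, and equality then rests entirely on Lemma~\ref{lem:conveqiv}'s assertion that $\tilde p_{st}$ and $p_{st}$ have the same optimum. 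You should be aware, though, that your closing dual remark is on target and bites there too: in the example above $\tilde p_{st}$ attains $0$ while $\min p_{st}=5$, so the step in Lemma~\ref{lem:conveqiv} that converts a maximizer $\lambda^*$ of $F_1^*$ into a feasible $\nu^*=(\lambda^*)^l$ for $F_2^*$ ``with the same objective'' does not go through in general (the $\min_k$ in $F_2^*(\nu^*)$ need not be attained at $k=l$). In short: your proposal is incomplete where you say it is, your chosen consolidation strategy cannot work at the stated level of generality, but your diagnosis that the equality needs additional structure on the $\theta_{st}^{ijk}$ is substantive and applies to the paper's own argument as well.
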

\begin{proof}
  First we remark that obviously $\min q_{st} \le \min p_{st}$, since a sum of
  pointwise minima is never larger than the minimum of sums (over the same
  terms). In order to show $\min q_{st} \ge \min p_{st}$ we rewrite
  \begin{align}
    p_{st}(x_{st} | x_s, x_t) = \min_k p_{st}^k(x_{st} | x_s, x_t) \nonumber
  \end{align}
  with
  \begin{align}
    p_{st}^k(x_{st} | x_s, x_t) \defeq \sum_{ij} \theta_{st}^{ijk} x_{st}^{ij} + \imath_{\Delta^{L^2}}(x_{st}). \nonumber
  \end{align}
  Observe that the recession function of $p_{st}^k$ is trivial due to the
  simplex constraint $x_{st} \in \Delta^{L^2}$. Thus, we can apply
  Lemma~\ref{lem:conveqiv} on $p_{st} = \min_k p_{st}^k$ to obtain an
  equivalent convex program,
  \begin{align}
    \tilde p_{st}&(z, y | x_s, x_t) = \sum_{ijk} \theta_{st}^{ijk} y^{ijk} + \imath_{\Delta^K}(z) + \imath_{\ge 0}(y) \nonumber \\
    & + \imath\left\{ \sum\nolimits_{ij} y^{ijk} = z^k,\,
    x_s^i = \sum\nolimits_{jk} y^{ijk},\, x_t^j = \sum\nolimits_{ik} y^{ijk} \right\}.
    \nonumber
  \end{align}
  We substitute $z^k = \sum_{ij} y^{ijk}$ in $\tilde p_{st}$ (leading to the
  constraint $1 = \sum_k z^k = \sum_{ijk} y^{ijk}$, or combined with $y \ge
  0$, to $y \in \Delta^{KL^2}$),
  \begin{align}
    \tilde p_{st}(y | x_s, x_t) &= \sum_{ijk} \theta_{st}^{ijk} y^{ijk} + \imath_{\Delta^{KL^2}}(y) \nonumber \\
    & +\imath\left\{ x_s^i = \sum\nolimits_{jk} y^{ijk},\, x_t^j = \sum\nolimits_{ik} y^{ijk} \right\}.
    \nonumber
  \end{align}
  Let $x_{st}^*$ be a minimizer of $q_{st}$, and let $k^{ij} \defeq \arg\min_k
  \theta_{st}^{ijk}$. We set $(y^*)^{ijk} = (x_{st}^*)^{ij}$ iff
  $k=k^{ij}$ and 0 otherwise. Then $y^*$ is feasible for $\tilde p_{st}$
  and has the same objective value as $q_{st}(x_{st}^*)$. Consequently,
  $\min_{y} \tilde p_{st} \le \min_{x_{st}} q_{st}$, and $\min p_{st} \le
  \min q_{st}$ since $p_{st}$ and $\tilde p_{st}$ are equivalent. Overall, we
  have $\min p_{st} = \min q_{st}$ as claimed.
\end{proof}
The equivalence of $E_{\text{point-wise}}$ and $E_{\text{term-wise}}$ means
that given pairwise potentials $\theta_{st}^{ij}$ than can be written as
pointwise minimum of some ``convenient'' elementary potentials, we can focus
our attention on compact representations of these elementary potentials.

\subsection{Term-Wise Minimum of Compact Potentials}
\label{sec:relationcompact}

In this section we assume that elementary pairwise potentials
$\theta_{st}^{ijk}$ have a more compact equivalent, e.g.
\begin{align}
  \min_{x_{st} \in {\cal M}(x_s, x_t)} \sum_{ij} \theta_{st}^{ijk} x_{st}^{ij}
  = \min_{\begin{smallmatrix} y,w \\ y_s = x_s \\ y_t = x_t \end{smallmatrix}} f_{st}^k(y, w | x_s, x_t) \nonumber
\end{align}
for some convex function $f_{st}^k$ (having a trivial recession function). The
respective non-convex labeling energy reads as
\begin{align}
  \label{eq:pmin}
  E_{\text{min-prior}}(\vec x) &= \sum_{s,i} \theta_s^i x_s^i + \sum_s \imath \{ x_s  \in \Delta^L \} \\
  & + \sum_{s \sim t}\min_{k \in \{ 0, \ldots, K-1 \}}\min_{\begin{smallmatrix} y,w \\ y_s = x_s \\ y_t = x_t \end{smallmatrix}} f^k_{st} (y, w | x_s, x_t),
  \nonumber
\end{align}
and we use Lemma~\ref{lem:conveqiv} below to obtain an equivalent convex
program. For concreteness, and due to the important of piecewise convex
pairwise potentials, we instantiate $f_{st}^k$ with $f_{st}^{\text{cvx}}$
(recall Eq.~\ref{eq:fst_cvx}) in our derivation,
\begin{align}
  f_{st}^k&(y_s, y_t | x_s, x_t) \defeq \alpha_{st}^k \sum_i (Y_t^i - Y_s^i) + \beta_{st}^k
  \label{eq:f_st_k} \\
  &+ \sum_{l,i} \left[ \gamma_{st}^{kl} \left(Y_s^i - Y_t^{i+\delta_{st}^{kl}} \right) \right]_+ + \imath_\Delta(y_s) + \imath_\Delta(y_t)
  \nonumber
\end{align}
In order to apply Lemma~\ref{lem:conveqiv} we need the perspective of
$f_{st}^k$, which can be immediately stated as
\begin{align}
  \persp{(f_{st}^k)}&(z^k, y_s^k, y_t^k | x_s, x_t) \defeq \alpha_{st}^k \sum_i (Y_t^{ki} - Y_s^{ki}) + \beta_{st}^k z^k \nonumber \\
  &+ \sum_{l,i} \left[ \gamma_{st}^{kl} \left(Y_s^{ki} - Y_t^{k,i+\delta_{st}^{kl}} \right) \right]_+ \nonumber \\
  &+ \imath\left\{ \sum\nolimits_i y_s^{ki} = \sum\nolimits_i y_t^{ki} = z^k \right\} + \imath_{\ge 0}(y).
\end{align}
Application of Lemma~\ref{lem:conveqiv} (with the constraints $y_s = x_s$ and
$y_t = x_t$ taking the role of the constraint ``$y=\eta$'') establishes the
equivalence of
\begin{align}
  \min_k \min_{y_s, y_t} f_{st}^k(y_s, y_t | x_s, x_t) \nonumber
\end{align}
and
\begin{align}
  \min_{z \in \Delta^K} \min_{y \ge 0} \sum_k \bigg( & \alpha_{st}^k \sum_i (Y_t^{ki} - Y_s^{ki}) + \beta_{st}^k z^k
  + \sum_{l,i} \left[ \gamma_{st}^{kl} \left(Y_s^{ki} - Y_t^{k,i+\delta_{st}^{kl}} \right) \right]_+ \bigg)
\end{align}
subject to
\begin{align}
  x_s^i &= \sum_k y_s^{ki} &  x_t^i &= \sum_k y_t^{ki} \nonumber\\
  z^k &= \sum_i y_s^{ki} &  z^k &= \sum_i y_t^{ki}. \nonumber
\end{align}
The occuring variables have intuitive meanings: $z$ is a (soft) one-hot
encoding of which branch $k$ is active in $\min_k f_{st}^k$, i.e.\ $z$
represents the set $\arg\min_k f_{st}^k$. It is easy to see that for all
values of $x_s$ and $x_t$ a minimizer $z$ will attain only binary values (it
can also be fractional if $\arg\min_k f_{st}^k$ contains more than one
element).  $y_s^{ki}$ and $y_t^{ki}$ represent $x_s^i$ and $x_t^i$,
respectively (as ``local copies''), in the $k$-th branch.

If the above expressions are plugged into Eq.~\ref{eq:pmin}, and the
edge-specific unknowns $z^k$ etc.\ are augmented with the respective edge
subscript $st$, one obtains the energy given in Eq.~\ref{eq:pmin_full}. If we
specialize $f_{st}^k$ to be the linear but bounded potentials
$f_{st}^{\text{linear}}$ (Eq.~\ref{eq:fst_linear}) and express $z$ in terms of
$y$ (e.g.\ $z_{st}^k = \frac{1}{2} \left( \sum_i y_{st \to s}^{ki} + \sum_i
y_{st \to t}^{ki} \right)$), we arrive at the convex program given in
Eq.~\ref{eq:pmin_linear} relevant for the main result in
Theorem~\ref{thm:main}. One can directly read off the number of unknowns per
edge (which is $2KL$) and constraints (dual variables, at most $2L + K + 2KL =
2L(K+1)+K$) from the resulting program.

For practical implementations it can be beneficial to implement
specializations of $E_{\text{min-prior}}$ rather than
$E_{\text{min-linear}}$. For instance, if the smoothness prior is the minimum
of $K$ $L^1$-type potentials, a generic implementation based on
$E_{\text{min-cvx}}$ (which models the potential via $2K$ linear segments)
requires about twice the number of unknowns and constraints than a specific
formulation $E_{\text{min-}L^1}$ (depicted in Eq.~\ref{eq:pmin_L1}) derived
from $E_{\text{min-prior}}$ and $f_{st}^{L1}$
(Eq.~\ref{eq:fst_L1}).\footnote{If we assume the solver can directly cope with
  $|\cdot|$, which is the case e.g.\ for proximal methods-based
  implementations. }

\begin{figure*}[htb]
  \centering
  \begin{align}
    E_{\text{min-cvx}}(\vec x, \vec y, \vec z) &= \sum_{s,i} \theta_s^i x_s^i + \sum_{s \sim t} \sum_k
    \left( \alpha_{st}^k \sum_i \left( Y_{st \to t}^{ki} - Y_{st \to s}^{ki} \right) + \beta_{st}^k z_{st}^k
    + \sum_{l,i} \left[ \gamma_{st}^{kl} \left(Y_{st \to s}^{ki} - Y_{st \to t}^{k,i+\delta_{st}^{kl}} \right) \right]_+ \right)
    \label{eq:pmin_full}
  \end{align}
  \begin{align}
    \text{s.t.} & & x_s^i &= \sum_k y_{st \to s}^{ki} &  x_t^i &= \sum_k y_{st \to t}^{ki} \nonumber \\
    & & z_{st}^k &= \sum_i y_{st \to s}^{ki} & z_{st}^k &= \sum_i y_{st \to t}^{ki}
    & x_s &\in \Delta^L, \, z_{st} \in \Delta^K,\, \vec{y} \ge 0.
    \nonumber
  \end{align}
  \caption{The convex relaxation for MRFs with pairwise potentials, that are
    the point-wise minima of piecewise linear convex ones.}
\end{figure*}

\begin{figure*}[htb]
  \centering
  \begin{align}
    E_{\text{min-linear}}(\vec x, \vec y) &= \sum_{s,i} \theta_s^i x_s^i + \sum_{s \sim t} \sum_k
    \left( \alpha_{st}^k \sum_i \left( Y_{st \to t}^{ki} - Y_{st \to s}^{ki} \right)
    + \frac{\beta_{st}^k}{2} \left( \sum_i y_{st \to s}^{ki} + \sum_i y_{st \to t}^{ki} \right) \right)
    \label{eq:pmin_linear}
  \end{align}
  \begin{align}
    \text{s.t.} & & x_s^i &= \sum_k y_{st \to s}^{ki} &  x_t^i &= \sum_k y_{st \to t}^{ki}
    & \sum_i y_{st \to s}^{ki} &= \sum_i y_{st \to t}^{ki} \nonumber \\
    & & Y_{st\to s}^{ki} &\le Y_{st \to t}^{k,i+\overline{h}} & Y_{st\to s}^{ki} &\ge Y_{st \to t}^{k,i+\underline{h}}
    & x_s &\in \Delta^L, \, z_{st} \in \Delta^K,\, \vec{y} \ge 0.
    \nonumber
  \end{align}
  \caption{The specialization of $E_{\text{min-cvx}}$ to bounded linear
    potentials that is used in Theorem~\ref{thm:main}. }
\end{figure*}

\begin{figure*}[htb]
  \centering
  \begin{align}
    E_{\text{min-}L^1}(\vec x, \vec y) &= \sum_{s,i} \theta_s^i x_s^i + \sum_{s \sim t} \sum_k
    \left( \alpha_{st}^k \sum_i \left|Y_{st \to s}^{ki} - Y_{st \to t}^{ki}\right|
    + \frac{\beta_{st}^k}{2} \left( \sum_i y_{st \to s}^{ki} + \sum_i y_{st \to t}^{ki} \right) \right)
    \label{eq:pmin_L1}
  \end{align}
  \begin{align}
    \text{s.t.} & & x_s^i &= \sum_k y_{st \to s}^{ki} &  x_t^i &= \sum_k y_{st \to t}^{ki} \nonumber \\
    & & \sum_i y_{st \to s}^{ki} &= \sum_i y_{st \to t}^{ki}
    & x_s &\in \Delta^L, \, \vec{y} \ge 0.
    \nonumber
  \end{align}
  \caption{The specialization of $E_{\text{min-cvx}}$ to the minimum of
    $L^1$-type potentials. }
\end{figure*}

\section{Reducing the Grid Bias}
\label{sec:isotropic}

Until now we restricted the exposition to labeling tasks with underlying
discrete (i.e.\ graph structured) domains. In some cases continuously inspired
label assignment formulations (e.g.~\cite{chambolle2012convex}) are preferable
in image processing and computer vision problems due to the reduced
metrication artifacts. As pointed out
in~\cite{zach2012multilabel,zach2013connecting} the finite difference
discretization of continuous formulations is closely related to standard LP
relaxations for inference such as $E_{\text{LP-MRF}}$ (Eq.~\ref{eq:lpmrf}). In
a nutshell, continuously inspired labeling formulation replace linear
smoothness terms (i.e.\ $\sum_{ij} \theta_{st}^{ij} x_{st}^{ij}$ in
$E_{\text{LP-MRF}}$) with nonlinear, Euclidean-norm based terms in order to
achieve better counting of non grid-aligned discontinuities.

\begin{figure}[htb]
  \centering
  \includegraphics[width=0.2\linewidth]{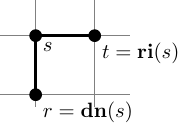}
  \caption{The forward difference stencil used in our discretization of the
    image plane. }
  \label{fig:stencil}
\end{figure}

We follow existing literature and use a forward difference stencil in the
following. We expect slightly improved visual results if the discretization of
the image plane is based on e.g.\ a staggered grid or uses a discrete calculus
formulation~\cite{grady2010discrete}.  In our finite difference setting the
image domain is represented by a regular grid with horizontal and vertical
edges between pixels. We index horizontal edges with subscripts $(s,h)$ (where
$s$ is the left grid point of the edge) and vertical ones with $(s,v)$ (see
also Fig.~\ref{fig:stencil}). Thus, e.g.\ variables $y_{s,h \to s}^{ki}$ and
$y_{s,v \to s}^{ki}$ represent $y_{st \to s}^{ki}$ depending whether $(s,t)$
is a horizontal or vertical edge. For notational simplicity we assume
homogeneous and symmetric pairwise potentials in the following (i.e.\ we drop
edge subscripts in the coefficients, and assume $\alpha^k = 0$). The simplest
isotropic extension shown in Eq.~\ref{eq:pmin_full_iso} replaces the separate,
$L^1$-type counting of horizontal and vertical discontinuities in
Eq.~\ref{eq:pmin_full}, e.g.\ terms such as
\begin{align}
  &\sum_{l,i} \left[ \gamma^{kl} \left(Y_{s,h \to s}^{ki} - Y_{s,h \to t}^{k,i+\delta^{kl}} \right) \right]_+
  + \sum_{l,i}\left[ \gamma^{kl} \left(Y_{s,v \to s}^{ki} - Y_{s,v \to t}^{k,i+\delta^{kl}} \right) \right]_+ \nonumber \\
  = & \sum_{l,i} \left\| \begin{matrix}
    \left[ \gamma^{kl} \left(Y_{s,h \to s}^{ki} - Y_{s,h \to t}^{k,i+\delta^{kl}} \right) \right]_+ \\
    \left[ \gamma^{kl} \left(Y_{s,v \to s}^{ki} - Y_{s,v \to t}^{k,i+\delta^{kl}} \right) \right]_+
  \end{matrix} \right\|_1
  \nonumber
\end{align}
with a joint Euclidean-norm penalizer,
\begin{align}
  \sum_{l,i} \left\| \begin{matrix}
    \left[ \gamma^{kl} \left(Y_{s,h \to s}^{ki} - Y_{s,h \to t}^{k,i+\delta^{kl}} \right) \right]_+ \\
    \left[ \gamma^{kl} \left(Y_{s,v \to s}^{ki} - Y_{s,v \to t}^{k,i+\delta^{kl}} \right) \right]_+
  \end{matrix} \right\|_2.
  \nonumber
\end{align}
Consequently, edges cut jointly in horizontal and vertical direction imply a
$\sqrt{2}$ increase in the smoothness term, which corresponds to the standard
$\sqrt{2}$-length penalization of diagonal discontinuities.

Similar reasoning holds for the constant cost $\beta^k$ in $f_{st}^k$
(Eq.~\ref{eq:f_st_k}, which translates to the term $\beta^k z_{st}^k$ in
Eq.~\ref{eq:pmin_full}). We also replace the $L^1$-type contribution
\begin{align}
  \beta^k \left( z_{s,h}^k + z_{s,v}^k \right) = \beta^k \left\| \begin{matrix} z_{s,h}^k \\ z_{s,v}^k \end{matrix} \right\|_1 \nonumber
\end{align}
(since $\vec z \ge 0$) with an Euclidean cost,
\begin{align}
  \beta^k \left\| \begin{matrix} z_{s,h}^k \\ z_{s,v}^k \end{matrix} \right\|_2,
%  = \beta^k \left\| \begin{matrix} \sum_i y_{s,h \to s}^k \\ \sum_i y_{s,v \to s}^k \end{matrix} \right\|_2
%  = \beta^k \left\| \begin{matrix} \sum_i y_{s,h \to t}^k \\ \sum_i y_{s,v \to t}^k \end{matrix} \right\|_2,
  \nonumber
\end{align}
%%where we used the relation between $\vec z$ and $\vec y$.
and the complete objective is given in $E_{\text{min-cvx}}^{\text{isotr.}}$
(Eq.~\ref{eq:pmin_full_iso}). In Eq.~\ref{eq:pmin_L1_iso} we further depict
the convex program in analogy to the anisotropic energy $E_{\text{min-}L^1}$
(Eq.~\ref{eq:pmin_L1}).

Our choice for the isotropic extension reduces to approaches presented in the
literature such as isotropic $L^1$ potentials~\cite{pock2010global}, Potts
smoothness prior~\cite{zach2008labeling}, and truncated
priors~\cite{zach2013connecting}. The construction is described for 2D image
domains but can obviously be extended to higher dimensional image domains.

\begin{figure*}[htb]
  \centering
  \begin{align}
    E_{\text{min-cvx}}^{\text{isotr.}}(\vec x, \vec y, \vec z) &= \sum_{s,i} \theta_s^i x_s^i + \sum_{s,k}
    \left( \beta^k \left\| \begin{matrix} z_{s,h}^k \\ z_{s,v}^k \end{matrix} \right\|_2
    + \sum_{l,i} 
    \left\| \begin{matrix}
      \left[ \gamma^{kl} \left( Y_{s,h \to s}^{ki} - Y_{s,h \to t}^{k,i+\delta^{kl}} \right) \right]_+ \\
      \left[ \gamma^{kl} \left( Y_{s,v \to s}^{ki} - Y_{s,v \to t}^{k,i+\delta^{kl}} \right) \right]_+
    \end{matrix} \right\|_2 \right)
    \label{eq:pmin_full_iso}
  \end{align}
  \begin{align}
    \text{s.t.} & & x_s^i &= \sum_k y_{s,h \to s}^{ki} &  x_{\mathsf{ri}(s)}^i &= \sum_k y_{s,h \to t}^{ki} 
    & x_s^i &= \sum_k y_{s,v \to s}^{ki} &  x_{\mathsf{dn}(s)}^i &= \sum_k y_{s,v \to t}^{ki} \nonumber \\
    & & z_{s,h}^k &= \sum_i y_{s,h \to s}^{ki} & z_{s,h}^k &= \sum_i y_{s,h \to t}^{ki}
    & z_{s,v}^k &= \sum_i y_{s,v \to s}^{ki} & z_{s,v}^k &= \sum_i y_{s,v \to t}^{ki} \nonumber \\
    & & & & & & & & x_s &\in \Delta^L, \, z_{s,h}, z_{s,v} \in \Delta^K,\, \vec{y} \ge 0.
    \nonumber
  \end{align}
  \caption{A convex relaxation for MRFs with symmetric priors reducing the grid bias.}
\end{figure*}

\begin{figure*}[htb]
  \centering
  \begin{align}
    E_{\text{min-}L^1}^{\text{isotr.}}(\vec x, \vec y, \vec z) &= \sum_{s,i} \theta_s^i x_s^i + \sum_{s,k}
    \left( \beta^k \left\| \begin{matrix} z_{s,h}^k \\ z_{s,v}^k \end{matrix} \right\|_2
    + \alpha^k \sum_i
    \begin{Vmatrix}
      Y_{s,h \to s}^{ki} - Y_{s,h \to t}^{ki} \\ Y_{s,v \to s}^{ki} - Y_{s,v \to t}^{ki} 
    \end{Vmatrix}_2 \right)
    \label{eq:pmin_L1_iso}
  \end{align}
  subject to the same constraints as in Eq.~\ref{eq:pmin_full_iso}.
  %% \begin{align}
  %%   \text{s.t.} & & x_s^i &= \sum_k y_{s,h \to s}^{ki} &  x_{\mathsf{ri}(s)}^i &= \sum_k y_{s,h \to t}^{ki} 
  %%   & x_s^i &= \sum_k y_{s,v \to s}^{ki} &  x_{\mathsf{dn}(s)}^i &= \sum_k y_{s,v \to t}^{ki} \nonumber \\
  %%   & & z_{s,h}^k &= \sum_i y_{s,h \to s}^{ki} & z_{s,h}^k &= \sum_i y_{s,h \to t}^{ki}
  %%   & z_{s,v}^k &= \sum_i y_{s,v \to s}^{ki} & z_{s,v}^k &= \sum_i y_{s,v \to t}^{ki} \nonumber \\
  %%   & & & & & & & & x_s &\in \Delta^L, \, z_{s,h}, z_{s,v} \in \Delta^K,\, \vec{y} \ge 0.
  %%   \nonumber
  %% \end{align}
  \caption{The specialization of $E_{\text{min-cvx}}^{\text{isotr.}}$ to
    $L^1$-type potentials.}
\end{figure*}

\begin{remark}
  In order to reduce the metrication artifacts for min-potentials one has two
  options: the first option is to convert an anisotropic formulation,
  e.g.\ Eq.~\ref{eq:pmin_full}, to behave ``less anisotropic.'' This is the
  approach as presented above.  The other option is to use isotropic
  formulations as elementary potentials and subsequently apply the
  construction described in Section~\ref{sec:metr} to obtain the minimum
  potential. If we focus on the minimum of $L^1$-type (total variation)
  potentials, and if we employ the standard forward-difference discretization,
  the underlying elementary potential is given by
  \begin{align}
      f^{TV}(y_s, y_t, y_r) &\defeq \alpha \sum_i \begin{Vmatrix} Y_s^i - Y_t^i \\ Y_s^i - Y_r^i \end{Vmatrix}_2 + \beta
      + \imath_{\Delta^L}(y_s) + \imath_{\Delta^L}(y_t) + \imath_{\Delta^L}(y_r),
      \label{eq:fst_TV}
  \end{align}
  where nodes $s$, $t$, and $r$ form a neighborhood as shown in
  Fig.~\ref{fig:stencil}. Application of Lemma~\ref{lem:conveqiv} to rewrite
  terms
  \begin{align}
    \min_k \left\{ \alpha^k \sum_i \begin{Vmatrix} Y_s^i - Y_t^i \\ Y_s^i - Y_r^i \end{Vmatrix}_2 + \beta^k \right\}
  \end{align}
  leads to the convex program Eq.~\ref{eq:pmin_L1_iso_b}. It can be observed
  from the resulting constraints, that in this formulation e.g.\ the same
  elementary potential needs to be selected in horizontal \emph{and} vertical
  direction. Further, the ``constants'' $\beta^k$ are counted differently in
  $E_{\text{min-$L^1$}}^{\text{isotr.}}$ and
  $E_{\text{min-$L^1$-b}}^{\text{isotr.}}$. We currently prefer
  $E_{\text{min-$L^1$}}^{\text{isotr.}}$, since it reduces to the Potts and
  truncated $L^1$ models presented in the literature, but a deeper analysis is
  subject of future research.
\end{remark}

\begin{figure*}[htb]
  \centering
  \begin{align}
    E_{\text{min-$L^1$-b}}^{\text{isotr.}}(\vec x, \vec y, \vec z) &= \sum_{s,i} \theta_s^i x_s^i + \sum_{s,k}
    \left( \beta^k z_s^k + \alpha^k \sum_i
    \begin{Vmatrix}
      Y_{s \to s}^{ki} - Y_{s \to t}^{ki} \\ Y_{s \to s}^{ki} - Y_{s \to r}^{ki} 
    \end{Vmatrix}_2 \right)
    \label{eq:pmin_L1_iso_b}
  \end{align}
  \begin{align}
    \text{s.t.} & & x_s^i &= \sum_k y_{s \to s}^{ki} &  x_{\mathsf{ri}(s)}^i &= \sum_k y_{s \to t}^{ki} 
    & x_{\mathsf{dn}(s)}^i &= \sum_k y_{s \to r}^{ki} \nonumber \\
    & & z_{s}^k &= \sum_i y_{s \to s}^{ki} & z_{s}^k &= \sum_i y_{s \to t}^{ki} & z_{s}^k &= \sum_i y_{s \to r}^{ki}
    & x_s &\in \Delta^L, \, z_{s}\in \Delta^K,\, \vec{y} \ge 0.
    \nonumber
  \end{align}
  \caption{An alternative formulation to reduce the grid bias of
    $E_{\text{min-}L^1}$.}
\end{figure*}

\section{Numerical Results}
\label{sec:experiment}

%\subsection{Implementation}

Unless otherwise noted we use a straightforward OpenMP-parallelized C++
implementation of the first order primal-dual method described
in~\cite{pock2011diagonal} to find minimizers of the respective convex
program. As with other proximal methods the algorithm leaves freedom of how
the convex problem is splitted (i.e.\ which dual unknowns are introduced, and
the choice of proximal steps utilized). The employed splitting often has a
significant impact on convergence behavior. In general, we eliminate $\vec z$
from the objective and introduce Lagrange multipliers for each of the
remaining constraints. We also introduce bounds-constrained dual variables for
terms as
\begin{align}
  &\left[ \gamma^{kl} \left(Y_{s,h \to s}^{ki} - Y_{s,h \to t}^{k,i+\delta^{kl}} \right) \right]_+
  = \left[ \gamma^{kl} \left(\sum_{j=0}^{i-1} y_{s,h \to s}^{kj} - \sum_{j=0}^{i-1} y_{s,h \to t}^{k,j+\delta^{kl}} \right) \right]_+ \nonumber \\
  = &\max_{p \in [0, \gamma^{kl}]} p \left( \sum_{j=0}^{i-1} y_{s,h \to s}^{kj} - \sum_{j=0}^{i-1} y_{s,h \to t}^{k,j+\delta^{kl}} \right).
  \nonumber
\end{align}
A naive implementation of the respective primal-dual update steps has a
$O(L^2)$ time complexity (per edge in the graph), and we use appropriate
running sums to preserve the $O(L)$ complexity.

\subsection{Early Stopping of Message Passing Methods}

As first experiment we verify the claim that early stopping can occur
frequently in dual block coordinate methods for MAP inference such as
MPLP~\cite{globerson2007fixing}. We follow the setup of a similar experiment
described in~\cite{schwing2012globally}, but replace the 3-label spin glass
model considered there by problem instances with many labels and piecewise
linear smoothness priors. Our setup is as follows: the domain is a $20 \times
20$ regular grid with the standard 4-connected neighborhood stucture, and the
state space contains 20 labels. The unary potentials are sampled randomly from
a uniform distribution $\theta_{s}^i \sim \mathcal{U}(0,2)$, and the pairwise
potentials are truncated linear ones with
\begin{equation}
  \vartheta_{st}^{h} = \alpha_{st} \min \{ |h| , 2 \}, 
\end{equation}
and $\alpha_{st} \sim \mathcal{U}(0,1)$ are used. We solve 300 random
instances using MPLP and compare the energy to the globally optimal one
obtained by minimizing $E_{\text{LP-MRF}}$ using the primal-dual algorithm. In
about $30\%$ of the problem instances MPLP stops early with an energy
difference of more than 0.001 and in about $26\%$ with more than 0.01 (see
also Fig.~\ref{fig:earlystopping}).

\begin{figure}[htb]
  \centering
  \includegraphics[width=0.7\linewidth]{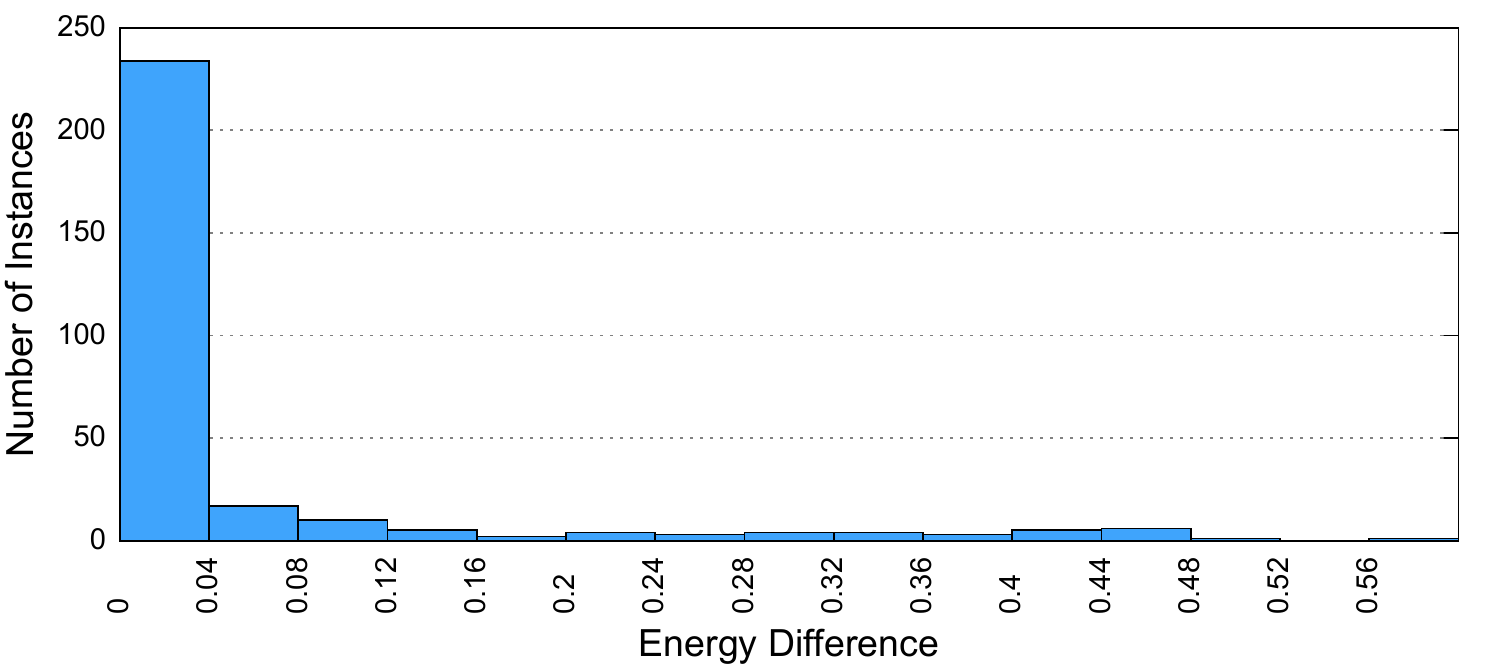}
  \caption{Histogram of optimality gaps for solutions returned by MPLP. }
  \label{fig:earlystopping}
\end{figure}

\subsection{Variable Smoothing}

Recent developments in accelerated gradient methods
(e.g.~\cite{bot2012variable}) appear to be very appealing in order to optimize
the non-smooth dual program of e.g.\ $E_{\text{LP-MRF}}$
(Eq.~\ref{eq:lpmrf}). The method proposed in~\cite{bot2012variable} guarantees
an $O(\ln(T)/T)$ convergence rate, where $T$ is the iteration count, but
requires to set a tuning parameter. In Fig.~\ref{fig:var_smoothing} it is
illustrated that this parameter has to be chosen carefully to achieve
competitive performance. Our conclusion is that such variable smoothing
methods cannot (yet) replace compact reformulations as proposed in the
previous sections.

\begin{figure}
  \centering
  \includegraphics[width=0.7\linewidth]{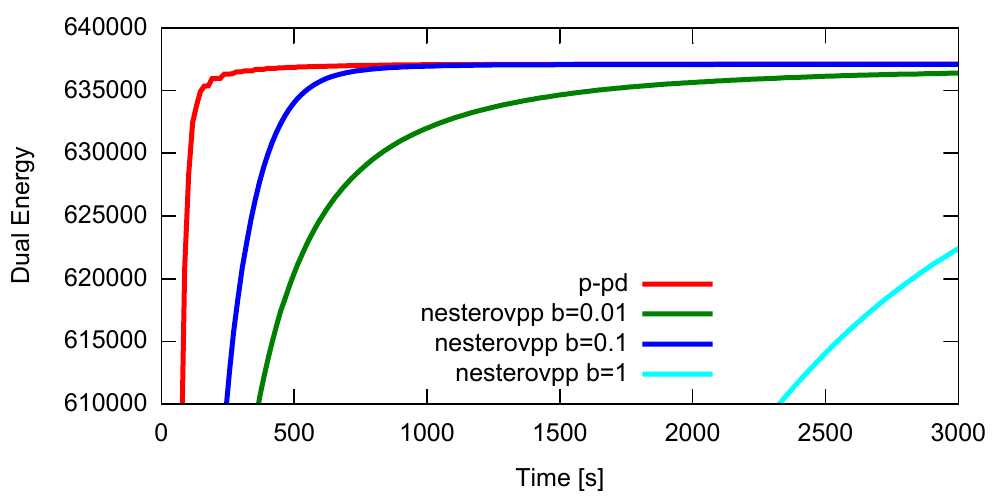}
  \caption{Energy evolution for a 32 label denoising problem utilizing compact
    potential with three pieces. The preconditioned primal dual (p-pd)
    algorithm~\cite{pock2011diagonal} outperforms the variable smoothing
    (nesterovpp) algorithm~\cite{bot2012variable} with different tuning
    parameters $b$}
  \label{fig:var_smoothing}
\end{figure}

\subsection{Convergence Speed and Memory Consumption}

We use a more realistic application to compare memory requirements and the
evolution of energies. While a compact reformulation such as
$E_{\text{min-}L^1}$ has a smaller problem size than $E_{\text{LP-MRF}}$, it
is not clear whether the more complicated problem structure may lead to slower
convergence. We chose a simple image denoising application for this
demonstration. We use a piecewise linear approximation depicted in
Fig.~\ref{subfig:denoisPairwise} of the gradient statistic of natural
images~\cite{huang1999statistics}. The unary potential (shown in
Fig.~\ref{subfig:denoisUnary}) is induced directly by our image corruption
procedure, which is as follows: a random set containing five percent of the
pixels are considered as outliers and their clean intensity values are
replaced by a uniform random value from $[0,255]$. For the remaining inlier
pixels we add Gaussian noise drawn from $\mathcal{N}(0,10)$ to their
respective clean intensities. Thus, the data fidelity term is given by
\begin{align}
  D(\vec u) = \sum_s -\lambda \log\left( \frac{5}{100} + \frac{95}{100} \phi(u_s - g_s; 0, 10) \right), \nonumber
\end{align}
where $\phi(x; \mu, \sigma) = \frac{1}{\sqrt{2\pi\sigma^2}} \exp(-\frac{(x -
  \mu)^2}{2\sigma^2})$ is the density function of the Normal distribution. We
set $\lambda = 1$, and the utilized regularizer
(Fig.~\ref{subfig:denoisPairwise}) is
\begin{align}
  R(\vec u) = \sum_{s \sim t} \min_{k\in \{0,1,2\}} \left\{ \alpha^k |u_s - u_t| + \beta^k \right\} \nonumber
\end{align}
with $(\alpha^0, \beta^0) = (24, 0)$, $(\alpha^1, \beta^1) = (8, 1)$, and
$(\alpha^2, \beta^2) = (3.2, 2)$. The observed $400 \times 300$ noisy image is
illustrated in Fig.~\ref{subfig:denoisInput}, and the recovered image can be
seen in Fig.~\ref{subfig:denoisResult}. The solution image is determined by
extracting the 1/2-isolevel of the superlevel function $X_s^i =
\sum_{j=0}^{i-1} x_s^j$.

We discretize the continuous state space $[0,255]$ into 64 labels. The memory
used to minimize $E_{\text{LP-MRF}}$ for this problem is almost 8GB, and
optimizing $E_{\text{min-}L^1}$ ($E_{\text{min-}L^1}^{\text{isotr.}}$,
respectively) requires about 1.1GB memory. Hence, the latter formulations will
fit in graphics memory and can therefore leverage GPUs for acceleration. The
evolution of the objective $D(\vec u) + R(\vec u)$ is displayed in
Fig.~\ref{fig:denoising_energies}. All primal and dual unknowns are
initialized with 0, which may explain the initial increase in the objective
value. The compact reformulations
$E_{\text{min-}L^1}$/$E_{\text{min-}L^1}^{\text{isotr.}}$ have a clear
advantage over the exhaustive model $E_{\text{LP-MRF}}$.

\begin{figure}
\hfill
\subfigure[Unary potential]{\label{subfig:denoisUnary}\includegraphics[width=0.45\linewidth]{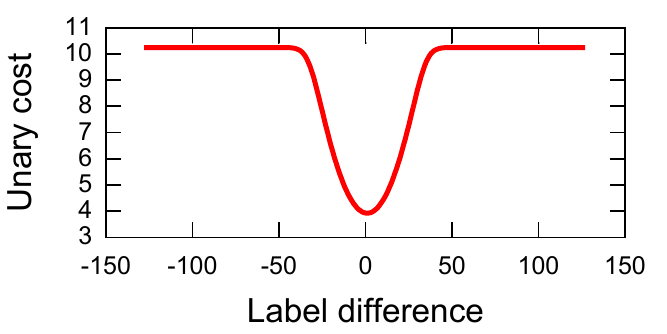}} \hfill
\subfigure[Pairwise potential]{\label{subfig:denoisPairwise}\includegraphics[width=0.45\linewidth]{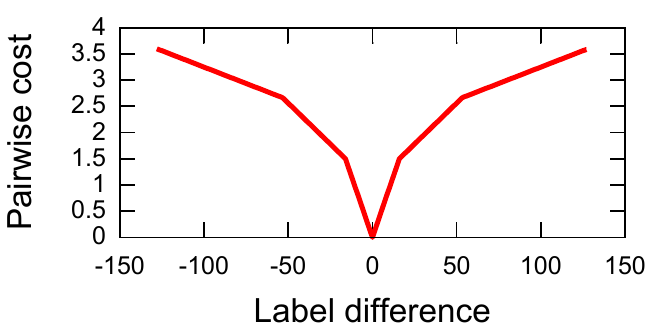}} \\
\subfigure[Noisy input]{\label{subfig:denoisInput}\includegraphics[width=0.45\linewidth]{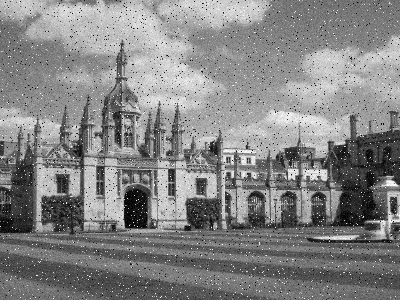}} \hfill
%\subfigure[Denoised image]{\label{subfig:denoisResult}\includegraphics[width=0.48\linewidth]{figures/linear_potentials.png}}
\subfigure[Denoised image]{\label{subfig:denoisResult}\includegraphics[width=0.45\linewidth]{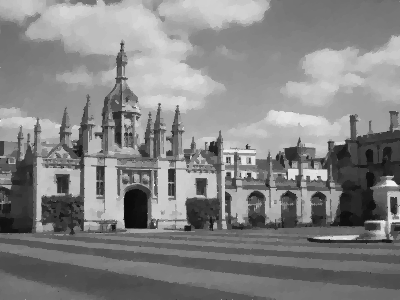}}
\hfill
\caption{Image denoising using 64 labels and a compact piece-wise linear potential.}
\label{fig:experiment}
\end{figure}

\begin{figure}[htb]
  \centering
  \includegraphics[width=0.7\linewidth]{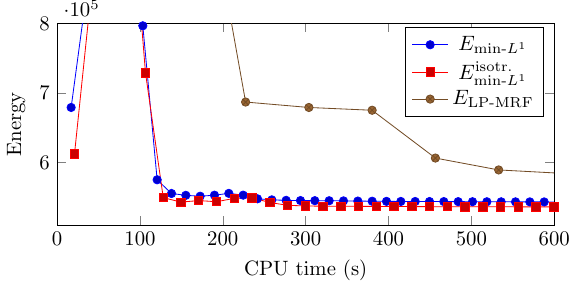}
  \caption{Energy evolution of $E_{\text{LP-MRF}}$ and compact reformulations. }
  \label{fig:denoising_energies}
\end{figure}

\subsection{Comparison with A Continuously-inspired Formulation}

In~\cite{pock2010global} a continuous approach is described that addresses
labeling problems with convex smoothness priors (in terms of the spatial
gradient of the assigned label function) but arbitrary data fidelity term. It
is further shown that in the continuum a global solution can be obtained by
thresholding a minimizer of an underlying convex relaxation. This result does
in general not hold after discretizing the continuous functional. An
interesting example of a convex smoothness prior that is considered
in~\cite{pock2010global} is the Lipschitz prior, $\vartheta^h = \imath\{ |h|
\le \eta\}$ for some $\eta \ge 0$. This regularizer enforces bounded label
differences for adjacent nodes (pixels) and is very compact to represent in
the framework of Section~\ref{sec:conv},
\begin{align}
  f_{st}^{\text{Lipschitz}}(y_s, y_t | x_s, x_t) = \imath\left\{ Y_s^i \le Y_t^{i-L\eta},\, Y_s^i \ge Y_t^{i+L\eta} \right\}
  \nonumber
\end{align}
together with $y_s \in \Delta^L$ and $y_t \in \Delta^L$. $L$ will be 32 in the
following. Note that the energy formulation in~\cite{pock2010global} also
requires only $O(L)$ variables and constraints per edge in the grid.  In order
to have a ground truth result available for better comparison, we use a
(convex) quadratic data term, $(u_s-g_s)^2$, which implies that an optimal
labeling can be easily obtained (see Figs.~\ref{fig:lipschitz}(a,d), we choose
$\eta = 1/16$). Figs.~\ref{fig:lipschitz}(b,e) depict the result of the
discretized model~\cite{pock2010global} with the label space discretized into
$L=32$ states, and Figs.~\ref{fig:lipschitz}(c,f) display the result of
Eq.~\ref{eq:pmin_full_iso} specialized to $f_{st}^{\text{Lipschitz}}$. In
terms of the PSNR our result is much closer to the true minimizer, and further
preserves more image details. This small experiment indicates that often care
is required when working with discretized problems of continuous labeling
functionals.

\begin{figure*}[htb]
  \centering
  \subfigure[True solution]{\includegraphics[width=0.16\linewidth]{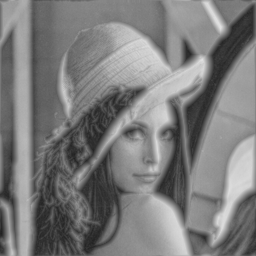}}
  \subfigure[PSNR: 36.03]{\includegraphics[width=0.16\linewidth]{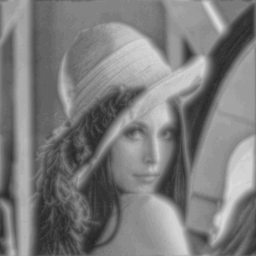}}
  \subfigure[PSNR: 37.56]{\includegraphics[width=0.16\linewidth]{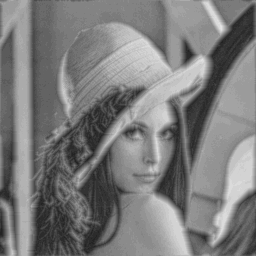}}
  \subfigure[Detail of (a)]{\includegraphics[width=0.16\linewidth]{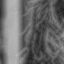}}
  \subfigure[Detail of (b)]{\includegraphics[width=0.16\linewidth]{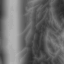}}
  \subfigure[Detail of (c)]{\includegraphics[width=0.16\linewidth]{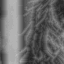}}
  \caption{Ground truth result (a) and solutions returned by a continuous
    formulation (b) and our reformulation (c) for a labeling task with
    Lipschitz prior.  (d--f) depict zoomed in region in the lower left
    corner. }
  \label{fig:lipschitz}
\end{figure*}

\section{Conclusion}

We show that pairwise potentials, that can be written as piecewise linear
functions in terms of the respective label difference, allow compact
reformulations of the LP relaxation for MAP inference. These reformulations do
not weaken the relaxation or modify the returned minimizer. The resulting
savings in memory consumption can be very significant (e.g.\ one order of
magnitude) for many-label problems. The construction also extends to
formulations aiming to reduce the grid bias often used in image processing.

Future work will address the applicability of the underlying techniques
developed in this manuscript to general piecewise linear potentials
$\theta_{st}^{ij}$ (not only those that can be written as $\theta_{st}^{ij} =
\vartheta_{st}^{j-i}$), and to higher order potentials beyond pairwise ones.

{\small
\bibliographystyle{plain}
\bibliography{references}
}

\end{document}